\newcommand{\argmax}{\mbox{\,\rm arg\,max}}
\newcommand{\xins}{x \in S}
\title{Accelerating the Computation of UCB and Related Indices for Reinforcement Learning}
\author{\name Wesley Cowan \email cwcowan@cs.rutgers.edu \\
    \addr Department of Computer Science \\
    Rutgers University \\
    110 Frelinghuysen Road, Piscataway, NJ 08854, USA
    \AND
      \name Michael N. Katehakis \email mnk@rutgers.edu \\
    \addr Department of Management Science and Information Systems\\
    Rutgers University\\
    100 Rockafeller Road, Piscataway, NJ 08854, USA
    \AND
    \name Daniel Pirutinsky \email dp771@scarletmail.rutgers.edu \\
    \addr Department of Management Science and Information Systems\\
    Rutgers University\\
    100 Rockafeller Road, Piscataway, NJ 08854, USA}
\begin{document}
    
\maketitle

\begin{abstract}%
In this paper we derive an efficient method for computing the indices associated with an asymptotically optimal upper confidence bound algorithm (MDP-UCB) of \citet{burnetas1997optimal} that only requires solving a system of two non-linear equations with two unknowns, irrespective of the cardinality of the state space of the Markovian decision process (MDP). In addition, we develop a similar acceleration for computing the indices for the MDP-Deterministic Minimum Empirical Divergence (MDP-DMED) algorithm developed in \citet{cowan2019reinforcement}, based on ideas from \citet{honda2011asymptotically}, that involves solving a single equation of one variable. We provide experimental results demonstrating the computational time savings and regret performance of these algorithms. In these comparison we also consider the Optimistic Linear Programming (OLP) algorithm \citep{tewari2008optimistic} and a method based on Posterior sampling (MDP-PS).
\end{abstract}

\begin{keywords}
    reinforcement learning, bandit problems, Markov decision processes, asymptotic optimality, efficient computation
\end{keywords}

\section{Introduction}
The practical use of the asymptotically optimal UCB algorithm (MDP-UCB) of \citet{burnetas1997optimal} has been hindered \citep{tewari2008optimistic,auer2007logarithmic} by the computational burden of the upper confidence bound indices c.f. Eq. \eqref{eq:UCB-index}, that involves the solution of a non-linear constrained optimization problem of dimension equal to the cardinality of the state space of the Markovian decision process (MDP) under consideration. In this paper we derive an efficient computational method that only requires solving a system of two non-linear equations with two unknowns, irrespective of the cardinality of the state space of the MDP. In addition, we develop a similar acceleration for  computing the indices for the MDP-Deterministic Minimum Empirical Divergence (MDP-DMED) developed in \citet{cowan2019reinforcement}, that involves solving a single equation of one variable. In Section \ref{sec:computation} we present these computationally efficient formulations and provide experimental results demonstrating the computational time savings. 

\subsection{Related Work}
Many modern ideas of reinforcement learning originate in work done for the multi-armed bandit problem c.f. \citet{gittins1979bandit, gittins2011multi}, \citet{auer2002finite}, \citet{whittle1980multi}, \citet{weber1992gittins},  \citet{sonin2016continue}, \citet{mahajan2008multi}, \citet{katehakis1987multi, katehakis1996finite, katehakis1986computing}.

In addition to the papers upon which the algorithms here are explicitly based, there are many other approaches for adaptively learning MDPs while minimizing expected regret. \citet{jaksch2010optimal} propose an algorithm, UCRL2, a variant of the UCRL algorithm of \citet{auer2007logarithmic}, that achieves logarithmic regret asymptotically, as well as uniformly over time. UCRL2, defines a set of plausible MDPs and chooses a near-optimal policy for an optimistic version of the MDP through so called ``extended value iteration''. This approach, while similarly optimistic in flavor, is sufficiently different than the algorithms presented here that we will not be comparing them directly. The algorithms in this paper act upon the estimated transition probabilities of actions for only our current state, for a fixed estimated MDP. Specifically, MDP-UCB and OLP inflate the right hand side of the optimality equations by perturbing the estimated transition probabilities for actions in the current state. MDP-DMED estimates the rates at which actions should be taken by exploring nearby plausible transition probabilities for actions in the current state. Finally, MDP-PS obtains posterior sampled estimates, again, only for, the transition probabilities for actions in the current state.

Recently, \cite{efroni2019tight} show that model-based algorithms (which all the algorithms discussed here are), that use 1-step planning can achieve the same regret performance as algorithms that perform full-planning. This allows for a significant decrease in the computational complexity of the algorithms. In particular they propose UCRL2-GP, which uses a greedy policy instead of solving the MDP as in UCRL2, at the beginning of each episode. They find that this policy matches UCRL2 in terms of regret (up to constant and logarithmic factors), while benefiting from decreased computational complexity. The setting under consideration however, is a finite horizon MDP and the regret bounds are in PAC terms \citep{dann2017unifying} and optimal minimax \citep{osband2016lower}. Further analysis is required to transfer these results to the setting of this paper. Namely, an infinite horizon MDP with bounds on the asymptotic growth rate of the expected regret. A fruitful direction of study would be to examine the relationship between UCRL2-GP, UCRL2, and the algorithms presented here, more closely, paying particular attention to the varying dependencies on the dimensionality of the state space.

\citet{osband2017why} analyze and compare the expected regret and computational complexity of PS-type algorithms (PSRL therein) versus UCB-type (OFU therein) algorithms, in the setting of finite horizon MDPs. The PSRL algorithm presented there is similar to MDP-PS here. However, their optimistic inflation or stochastic optimism is done across the MDP as a whole, either over plausible MDPs in the case of OFU, or for a fixed MDP in the PSRL case. By contrast, in this paper we present non-episodic versions where the inflations are done only for the actions of our current state for a fixed estimated MDP. They also argue therein that any OFU approach which matches PSRL in regret performance will likely result in a computationally intractable optimization problem. Through that lens, the main result of this paper, proving a computationally tractable version of the optimization problem shows that actually a provably asymptotically optimal UCB approach \textbf{can} compete with a PS approach both in terms of regret performance as well as computational complexity. A more thorough analysis is required in order to determine what parts of our analysis here, with an undiscounted infinite horizon MDP, can carry over to the finite horizon MDP setting of \citet{osband2017why} and \citet{osband2016lower}.

As this is a fast growing area of research, there is a lot of recent work. A good resource for reinforcement learning problems and their potential solution methods is \citet{bertsekas2019reinforcement}. For a more bandit focused approach, \citet{lattimore2018bandit} has a nice overview of the current state of the art. Most directly relevant to this paper are Chapters 8, 10, and 38 therein. \citet{cesa-bianchi2006prediction} discuss online learning while minimizing regret for predicting individual sequences of various forms, with Chapter 6 (bandit related problems) therein being most relevant here. For other related early work we refer to \citet{Mandl74},
  \citet{borkarV82}, \citet{Agrawal88MDPs}, and  \citet{Agrawal88Mabs}.

\subsection{Paper Structure}
The paper is organized as follows. In Section \ref{sec:formulation} we formulate the problem under consideration first as a completely known MDP and then as an MDP with unknown transition laws. In Section \ref{sec:algorithms} we present four simple algorithms \footnote{A version of some of the algorithms and comparisons has appeared in a previous technical note \citet{cowan2019reinforcement}.}for adaptively optimizing the average reward in an unknown irreducible MDP. The first is the asymptotically optimal UCB algorithm (MDP-UCB) of \citet{burnetas1997optimal} that uses estimates for the MDP and choose actions by maximizing an inflation of the estimated right hand side of the average reward optimality equations. The second (MDP-DMED) is inspired by the DMED method for the multi-armed bandit problem developed in \citet{honda2010asymptotically, honda2011asymptotically} and estimates the optimal rates at which actions should be taken and attempts to take actions at that rate. The third is the Optimistic Linear Programming (OLP) algorithm \citep{tewari2008optimistic} which is based on MDP-UCB but instead of using the KL divergence to inflate the optimality equations, uses the $L_1$ norm. The fourth (MDP-PS) is based on ideas of greedy posterior sampling that go back to \citet{thompson1933likelihood} and similar to PSRL in \citet{osband2017why}. The main contribution of this paper is in Section \ref{sec:computation}, where we present the efficient formulations and demonstrate the computational time savings. Various computational challenges and simplifications are discussed, with the goal of making these algorithms practical for broader use. In Section \ref{sec:comparison} we compare the regret performance of these algorithms in numerical examples and discuss the relative advantages of each. While no proofs of optimality are presented, the results of numerical experiments are presented demonstrating the efficacy of these algorithms. Proof of optimality for these algorithms will be discussed in future works. 

\section{Formulation}\label{sec:formulation}
Reinforcement learning problems are commonly expressed in terms of a \textit{controllable}, \textit{probabilistic}, \textit{dynamic system}, where the dynamics must be learned over time. The classical model for this is that of a discrete time, finite state and action Markovian decision process (MDP). See for example, \citet{derman1970finite} and \citet{auer2007logarithmic}. In particular, learning is necessary when the underlying dynamics (the transition laws) are unknown, and must be learned by observing the effects of actions and transitions of the system over time.

A finite MDP  is specified by a quadruple $(S,A,R,P)$, where $S$ is a finite state space, $A=[ A(x) ]_{\xins}$ is the action space, with $A(x)$ being the (finite) set of admissible actions (or controls) in state $x$, $R=[r_{x,a}]_{\xins, a \in A(x)}$, is the expected reward structure and $P=[p^a_{x,y}]_{x,y \in S, a \in A(x)}$ is the transition law. Here $r_{x,a}$ and $p^a_{x,y}$ are respectively the one step expected reward and transition probability from state $x$ to state $y$ under action $a$. For extensions regarding state and action spaces and continuous time we refer to \citet{feinberg2016partially} and references therein.

When all elements of $(S, A, R, P)$ are known the model is said to be an MDP with {\sl complete information} (CI-MDP). In this case, optimal polices can be obtained via the appropriate version of Bellman's equations, given the prevailing  optimization criterion, state, action, time conditions and regularity assumptions; c.f. \citet{feinberg2016partially}, \citet{robbins1952some}.
When some of the elements of $(S,A,R,P)$ are unknown the model is said to be an MDP with incomplete or {\sl partial information} (PI-MDP). This is the primary model of interest for reinforcement learning, when some aspect of the dynamics must be learned through interaction with the system.

For the body of the paper, we consider the following partial information model: the transition probability vector $\underline{p}^a_x = [p^a_{x,y}]_{y \in S}$ is taken to be an element of parameter space
$$\Theta = \left\{ \underline{p} \in \mathbb{R}^{\lvert S \rvert} : \sum_{y \in S} p_y = 1 , \forall y \in S, p_{y} > 0\right\},$$
that is, the space of all $\lvert S \rvert$-dimensional probability vectors. 

The assertion of this parameter space deserves some unpacking. It is at first simply a theoretical convenience---it ensures that for any control policy, the resulting Markov chain is irreducible. It also represents a complete lack of prior knowledge about the transition dynamics of the MDP. Knowing that certain state-state transitions are impossible requires prior model specific knowledge (such knowing the rules of chess).  Learning based purely on finite observed data could never conclude that a given transition probability is zero. Thus, we assert a uniform Bayesian prior on the transition probabilities and therefore the likelihood associated with $p = 0$ is $0$. In this way, asserting this parameter space starting out represents a fairly agnostic initial view of the underlying learning problem. A possible future direction of study is to examine how to efficiently incorporate prior knowledge, for instance modifying the specified parameter space, into the learning process without compromising on the learning rate. \citet{killian2017robust} and \citet{doshi-velez2016hidden} discuss hidden parameterized transition models, for example, which leverage additional prior knowledge about the transition probability space.

In the body of this paper, we take this unknown transition law to be the only source of incomplete information about the underlying MDP; the reward structure $R = [ r_{x,a}]_{x \in S, a \in A(x)}$ is taken to be known (at least in expectation), and constant. Much of the discussed algorithms will generalize to the situation where the distribution of rewards must also be learned, but we reserve this for future work.

Under this model, we define a sequence of state valued random variables $X_1, X_2, X_3, \ldots$ representing the sequence of states of the MDP (taking $X_1 = x_1$ as a given initial state), and action valued random variables $A_1, A_2, \ldots$ as the action taken by the controller, action $A_t$ being taken at time $t$ when the MDP is in state $X_t$. It is convenient to define a control policy $\pi$ as a (potentially random) history dependent sequence of actions such that $\pi(t) = \pi(X_1,A_1,\ldots,X_{t-1},A_{t-1},X_t) = A_t \in A(X_t)$. We may then define the value of a policy as the total expected reward over a given horizon of action:
\begin{equation*}
V_\pi(T) = \mathbb{E}\left[ \sum_{t = 1}^T r_{X_t, A_t} \right].
\end{equation*}

Let $\Pi$ be the set of all feasible MDP policies $\pi$. We are interested in policies that maximize the expected reward from the MDP. In particular, policies that are capable of maximizing the expected reward irrespective of the initial uncertainty that exists about the underlying MDP dynamics (i.e., for all possible $P$ under consideration). It is convenient then to define $V(T) = \sup_{\pi \in \Pi} V_\pi(T)$. We may then define the ``regret'' as the expected loss due to ignorance of the underlying dynamics,
\begin{equation*}
R_\pi(T) = V(T) - V_\pi(T).
\end{equation*}
Note, $V, V_\pi, R_\pi$ all have an implicit dependence on $P$, through the dynamics of the states and effects of the actions.

We are interested in uniformly fast \citep{burnetas1997optimal} policies, policies $\pi$ that achieve $R_\pi(T) = O(\ln T)$ for \textit{all feasible transition laws} $P$. In this case, despite the controller's initial lack of knowledge about the underlying dynamics, she can be assured that her expected loss due to ignorance grows not only sub-linearly over time, but slower than any power of $T$. It is shown in \citet{burnetas1997optimal} that any uniformly fast policy has a strict lower bound of logarithmic asymptotic growth of regret, with the unknown transition law $P$ and reward structure $R$ only influencing the order coefficient, not the growth rate. Policies that achieve this lower bound are called asymptotically optimal c.f. \citet{burnetas1997optimal}.

As final notation, it is convenient to define the specific data available at any point in time, under a given (understood) policy $\pi$: let $T_x(t), T^a_x(t), T^a_{x,y}(t)$ be, respectively, the number of visits to state $x$, the uses of action $a$ in state $x$, and the transitions from $x$ to $y$ under action $a$, that are observed in the first $t$ rounds.

In the next subsection, we consider the case of the controller having complete information (the best possible case) and use this to motivate notation and machinery for the remainder of the paper. The body of the paper is devoted to presenting and discussing four computationally simple algorithm that are either provably asymptotically optimal, or at least appear to be. While no proofs of optimality are presented, the results of numerical experiments are presented demonstrating the efficacy of these algorithm. Proof of optimality for these algorithm will be discussed in future works.

\subsection{The Optimal Policy Under Complete Information}
Classical results \citep{burnetas1997optimal} show that there is a stationary, deterministic policy $\pi$ (each action depends only on the current state), that realizes the maximal long term average expected value. That is, a simple Markovian policy $\pi^*$ that realizes
\begin{equation*}
\lim_T \frac{ V_{\pi^*}(T) }{ T } = \phi^* =   \sup_{\pi \in \Pi} \liminf_T \frac{ V_\pi(T) }{T}.
\end{equation*}

We may characterize this optimal policy in terms of the solution for $\phi = \phi^*(A,P)$ and $\underline{v} = \underline{v}(A,P)$ of the following system of optimality equations:
\begin{equation}\label{eqn:dp}
\forall x \in S:\ 
\ \ \ \ \   \phi + v_x = \max_{a \in A(x)}\left( r_{x,a} + \sum_{y \in S} p^a_{x,y} v_y\right).
\end{equation}

Given the solution $\phi$ and vector $\underline{v}$ to the above equations, the asymptotically optimal policy $\pi^*$ can be characterized as, whenever in state $x \in S$, take any action $a$ which realizes the maximum in Eq. \eqref{eqn:dp}. We denote the set of such asymptotically optimal actions as $O(x,P)$. In general, $a^*(x,P)$ should be taken to denote an action $a^* \in O(x,P)$. Note, realizing this policy necessarily requires knowledge of $P$ and $R$, in order to solve the system of optimality equations.

The solution $\phi$ above represents the maximal long term average expected reward of an optimal policy. The vector $\underline{v}$, or more precisely, $v_x$ for any $x \in S$, represents in some sense the immediate value of being in state $x$ \textit{relative to the long term average expected reward}. The value $v_x$ essentially encapsulates the future opportunities for value available due to being in state $x$. 

\subsection{Optimal Policies Under Unknown Transition Laws}
The results of the previous section show that $V(T)$, the value of the optimal policy, goes approximately like $V(T) \approx \phi^* T$. We begin by characterizing the regret of any arbitrary policy $\pi$, comparing its value relative to this baseline. It will be convenient in what is to follow to define the following notation:
\begin{equation*}
L(x,a,\underline{p}, \underline{v}) = r_x(a) + \sum_{y \in S} p_y v_y.
\end{equation*}
The function $L$ represents the value of a given action in a given state, for a given transition vector---both the immediate reward, and the expected future value of whatever state the MDP transitions into. The value of an asymptotically optimal action for any state $x$ is thus given by $L^*(x,A,P) = L(x, a^*(x,P),\underline{p}_x^{a^*(x,P)}, \underline{v}(A,P))$. It can be  shown that the ``expected loss'' due to an asymptotically sub-optimal action, taking action $a \notin O(x,P)$ when the MDP is in state $x$, is in the limit given by
\begin{equation*}
\Delta(x,a,A,P)  = L^*(x,A,P) - L(x,a, \underline{p}^a_x, \underline{v}(A,P)).
\end{equation*}

In the general (partial or complete information) case, it is shown in \citet{burnetas1997optimal} that the regret of a given policy $\pi \in \Pi$ can be expressed asymptotically as
\begin{equation*}
R_\pi(T)   = V(T)  - V_\pi(T) = \sum_{x \in S} \sum_{a \notin O(x,P)} \mathbb{E}\left[ T^a_x(T) \right] \Delta(x,a,A,P) + O(1).
\end{equation*}

Note, the above formula justifies the description of $\Delta(x,a,A,P)$ as the ``average loss due to sub-optimal activation of $a$ in state $x$''. Additionally, from the above it is clear that in the case of complete information, when $P$ is known and therefore the asymptotically optimal actions are computable, the total regret at any time $T$ is bound by a constant. Any expected loss at time $T$ is due only to finite horizon effects. 

In general, for the unknown transition laws case, we have the following bound due to \citet{burnetas1997optimal}, for any uniformly fast policy $\pi$,
\begin{equation*}
\liminf_T \frac{ R_\pi(T) }{\ln T} \geq \sum_{x \in S} \sum_{a \notin O(x,P)} \frac{ \Delta(x,a,A,P) }{ \mathbf{K}_{x,a}(P) },
\end{equation*}
where $\mathbf{K}_{x,a}(P)$ represents the minimal Kullback-Leibler divergence between $\underline{p}^a_x$ and any $\underline{q} \in \Theta$ such that substituting $\underline{q}$ for $\underline{p}^a_xx$ in $P$ renders $a$ the unique optimal action for $x$. Recall, the Kullback-Leibler divergence is given by $\mathbf{I}(\underline{p}, \underline{q}) = \sum_{x \in S} p_x \ln( p_x / q_x )$. This is equivalent to stating that any sub-optimal action must be sampled at least at a minimum rate, in particular, for $a \notin O(x,P)$,
\begin{equation*}
\liminf_T \frac{ \mathbb{E}\left[ T^a_x(T) \right]  }{ \ln T } \geq \frac{ 1 }{  \mathbf{K}_{x,a}(P) }.
\end{equation*}
This can be interpreted in the following way: for a sub-optimal action, the ``closer'' the transition law is to an alternative transition law that would make it the best action, the more data we need to distinguish between the truth and this plausible alternative hypothesis, and therefore the more times we need to sample the action to distinguish the truth. Anything less than this ``base rate'', we risk convincing ourselves of a plausible, sub-optimal hypothesis and therefore incurring high regret when we act on that belief.

Policies that achieve this lower bound, for all $P$, are referred to as asymptotically optimal. Achieving this bound, or at least the desired logarithmic growth requires careful exploration of actions. In the next section, we present four algorithms to accomplish this.

\section{Algorithms for Optimal Exploration}\label{sec:algorithms}
Common RL algorithms solve the exploration/exploitation dilemma in the following way: most of the time, select an action (based on the current data) that seems best, otherwise select some other action. This alternative action selection is commonly done uniformly at random. As long as this is done infrequently, but not too infrequently, the optimal actions and policy will be discovered, potentially at the cost of high regret. Minimizing regret requires careful consideration of which alternative actions are worth taking at any given point in time. The following algorithms are methods for performing this selection; essentially, instead of blindly selecting from the available actions to explore, each algorithm evaluates the currently available data to determine which action is most worth exploring. Each accomplishes this through an exploration of the space of plausible transition hypotheses. 

The benefit of this is that through careful exploration, optimal (minimal) regret can be achieved. The cost however, is additional computation. The set of alternative transition laws is large and high dimensional, and can be difficult to work with. In Section \ref{sec:computation} we show several simplifications, however, that make this exploration practical.

\subsection{A UCB-Type Algorithm for MDPs Under Uncertain Transitions} \label{sec:UCB}
Classical upper confidence bound (UCB) decision algorithms (for instance as in multi-armed bandit problems, c.f. \citet{auer2010ucb}, \citet{burnetas1996optimal}, \citet{cowan2017normal}), approach the problem of exploration in the following way: in each round, given the current estimated transition law, we consider ``inflated'' estimates of the values of each actions, by finding the best (value-maximizing) plausible hypothesis within some confidence interval of the current estimated transition law. The more data that is available for an action, the more confidence there is in the current estimate, and the tighter the confidence interval becomes; the tighter the confidence interval becomes, the less exploration is necessary for that action. The algorithm we present here is a version of the MDP-UCB algorithm presented in \citet{burnetas1996optimal}.

At any time $t \geq 1$, let $x_t$ be the current (given) state of the MDP. We construct the following estimators:
\begin{itemize}
    \item Transition Probability Estimators: for each state $y$ and action $a \in A(x_t)$, construct $\hat{P}_t$ based on
    \begin{equation*}
    \hat{p}^a_{x_t,y} = \frac{ T^a_{x_t,y}(t) + 1 }{ T^a_{x_t}(t) + \lvert S \rvert }.
    \end{equation*}
    Note the biasing terms (the $1$ in the numerator, $\lvert S \rvert$ in the denominator). Including these, biases the estimated transition probabilities away from $0$, so that our estimates $\underline{p}_{x_t}^a$ will be in $\Theta$. Additionally, these guarantee that the above is in fact the maximum likelihood estimate for the transition probability, given the observed data and uniform priors.
    
    \item ``Good'' Action Sets: construct the following subset of the available actions $A(x_t)$,
    \begin{equation*}
    \hat{A}_t = \left\{ a \in A(x_t) : T^a_{x_t}(t) \geq \left( \ln T_{x_t}(t) \right)^2 \right\}.
    \end{equation*}
    The set $\hat{A}_t$ represents the actions available from state $x_t$ that have been sampled frequently enough that the estimates of the associated transition probabilities should be ``good''. In the limit, we expect that sub-optimal actions will be taken only logarithmically, and hence for sufficiently large $t$, $\hat{A}_t$ will contain only actions that are truly optimal. If no actions have been taken sufficiently many times, we take $\hat{A}_t = A(x_t)$ to prevent it from being empty.
    
    \item Value Estimates: having constructed these estimators, we compute $\hat{\phi}_t = \phi(\hat{A}_t, \hat{P}_t)$ and $\hat{\underline{v}}_t = \underline{v}(\hat{A}_t, \hat{P}_t)$ as the solution to the optimality equations in Eq. \eqref{eqn:dp}, essentially treating the estimated probabilities as correct and computing the optimal values and policy for the resulting estimated MDP.
\end{itemize}

At this point, we implement the following decision rule: for each action $a \in A(x_t)$, we compute the following \textit{index} over the set of possible transition laws:
\begin{equation} \label{eq:UCB-index}
u_a(t) = \sup_{\underline{q} \in \Theta} \left\{ L(x_t, a, \underline{q}, \hat{\underline{v}}) : \mathbf{I}(\underline{\hat{p}}^a_{x_t}, \underline{q}) \leq \frac{ \ln t}{T_{x_t,a}(t) } \right\},
\end{equation}
where $\mathbf{I}(\underline{p}, \underline{q}) = \sum_y p_y \ln( p_y / q_y )$ is the Kullback-Leibler divergence, and take action 
\begin{equation*}
\pi(t) = \argmax_{a \in A(x_t)} u_a(t).
\end{equation*}

This is a natural extension of several classical KL-divergence based UCB algorithms for the multi-armed bandit problem c.f. \citet{lai1985asymptotically}, \citet{burnetas1996optimal}, \citet{cowan2017normal} taking the view of the $L$ function as the `value' of taking a given action in a given state, estimated with the current data. In \citet{burnetas1996optimal}, a modified version of the above algorithm is in fact shown to be asymptotically optimal. The modification is largely for analytical benefit however, the pure index algorithm as above shows excellent performance c.f. Figure \ref{fig:simulation}. Further discussion of the performance of this algorithm is given in Section \ref{sec:comparison}.

An important and legitimate concern to the practical usage of the MDP-UCB algorithm that has been noted in \citet{tewari2008optimistic} among others, is actually calculating the index in Eq. \eqref{eq:UCB-index}. This and other issues are discussed in more depth in Section \ref{sec:computation}, where a computationally efficient formulation is presented. Additionally, in Section \ref{sec:comparison}, we highlight beneficial behavior of this algorithm that makes it worth pursuing.

\subsection{A Deterministic Minimum Empirical Divergence Type Algorithm for MDPs Under Uncertain Transitions}\label{sec:DMED}
In the classical DMED algorithm for multi-armed bandit problems \citep{honda2010asymptotically}, rather than considering (inflated) values for each action to determine which should be taken, DMED attempts to estimate how often each action ought to be taken. Recall the interpretation of \citet{burnetas1996optimal} given previously, that for any uniformly fast policy $\pi$, for any sub-optimal action $a \notin O(x,P)$ we have
\begin{equation*}
\liminf_T \frac{ \mathbb{E}\left[ T^a_{x}(T) \right] }{ \ln T} \geq \frac{1}{ \mathbf{K}_{x,a}(P) },
\end{equation*}
where $\mathbf{K}_{x,a}(P)$ measures (via the Kullback-Leibler divergence) how much the transition law for action $a$ would need to be changed to make action $a$ optimal. 

DMED proceeds by the following reasoning. If we estimate that the sub-optimal action $a$ is close to being optimal (low $K_{x,a}$), make sure we take it often enough to differentiate between them (ensure $T^a_x$ is high). If, on the other hand, we estimate that the sub-optimal action $a$ is far from being optimal (high $K_{x,a}$), we don't need to take is as often (ensure $T^a_x$ is low). As with the MDP-UCB and OLP algorithms, this requires an exploration of the possible transition laws ``near'' the current estimated transition law.

In general, computing the function $\mathbf{K}_{x,a}(P)$ is not easy. We consider the following substitute, then:
\begin{equation*}
\begin{split}
& \mathbf{\tilde{K}}_{x,a}(P, \underline{v}, a^*) =  \inf_{\underline{q} \in \Theta} \left\{ \mathbf{I}(\underline{p}^a_x, \underline{q}): L(x,a,\underline{q}, \underline{v}) \geq L(x,a^*,\underline{p}^{a^*}_x, \underline{v}) \right\}.
\end{split}
\end{equation*}
This is akin to exploratory \textit{policy iteration}. That is, determining, based on the current value estimates, how much modification would produce an improving action.

The function $\mathbf{K}$ measures how far the transition vector associated with $x$ and $a$ must be perturbed (under the KL-divergence) to make $a$ the optimal action for $x$. The function $\mathbf{\tilde{K}}$ measures how far the transition vector associated with $x$ and $a$ must be perturbed (under the KL-divergence) to make the value of $a$, as measured by the $L$-function, no less than the value of an optimal action $a^*$. As will be shown in Section \ref{sec:computation}, $\mathbf{\tilde{K}}$ may be computed fairly simply, in terms of the root of a single non-linear equation.

In this way, we have the following approximate MDP-DMED algorithm (see \citet{honda2010asymptotically} and \citet{honda2011asymptotically} for the multi-armed bandit version of this algorithm).

At any time $t \geq 1$, let $x_t$ be the current state, and construct the estimators as in the MDP-UCB algorithm in Section \ref{sec:UCB}, $\hat{P}_t$, $\hat{A}_t$, and utilize these to compute the estimated optimal values, $\hat{\phi}_t = \phi(\hat{A}_t, \hat{P}_t)$ and $\underline{\hat{v}}_t = \underline{v}( \hat{A}_t, \hat{P}_t)$.

Let $\hat{a}^*_t = \argmax_{a \in A(x_t)} L(x_t, a, \underline{\hat{p}}^{a}_{x_t}, \underline{\hat{v}}_t)$ be the estimated ``best'' action to take at time $t$. For each $a \neq \hat{a}^*_t$, compute the discrepancies $$D_t(a) = \ln t / \mathbf{\tilde{K}}_{x_t,a}(\hat{P}_t, \underline{\hat{v}}_t, \hat{a}^*_t) - T_{x_t,a}(t).$$

If $\max_{a \neq \hat{a}^*_t} D_t(a) \leq 0$, take $\pi(t) = \hat{a}^*_t$, otherwise, take $\pi(t) = \argmax_{a \neq \hat{a}^*_t} D_t(a).$

Following this algorithm, we perpetually reduce the discrepancy between the estimated sub-optimal actions, and the estimated rate at which those actions should be taken. The exchange from $\mathbf{K}$ to $\mathbf{\tilde{K}}$ sacrifices some performance in the pursuit of computational simplicity, however it also seems clear from computational experiments that MDP-DMED as above is not only computationally tractable, but also produces reasonable performance in terms of achieving small regret c.f. Figure \ref{fig:simulation}. Further discussion of the performance of this algorithm is given in Section \ref{sec:comparison}.

\subsection{Optimistic Linear Programming, Another UCB-Type Algorithm for MDPs Under Uncertain Transitions} \label{sec:OLP}
As we have previously noted,  \citet{tewari2008optimistic} raises some legitimate computational concerns. They propose an alternative, algorithm which they term ``optimistic linear programming'' (OLP), which is closely related to the MDP-UCB algorithm presented here. The main difference between OLP and MDP-UCB is that OLP does not use the KL divergence to determine the confidence interval. Instead, OLP uses $L_1$ distance, which allows the resulting index to be computed via solving linear programs. This reduces the computational complexity at the cost of performance. As we will show in Section \ref{sec:computation}, the MDP-UCB optimization problem can be simplified drastically, to render the use of OLP, at least with respect to the computational issues, unnecessary. The algorithm we present here is a version of OLP algorithm presented in \citet{tewari2008optimistic}.

At any time $t \geq 1$, let $x_t$ be the current state, and construct the estimators as in the MDP-UCB algorithm in Section \ref{sec:UCB}, $\hat{P}_t$, $\hat{A}_t$, and utilize these to compute the estimated optimal values, $\hat{\phi}_t = \phi(\hat{A}_t, \hat{P}_t)$ and $\underline{\hat{v}}_t = \underline{v}( \hat{A}_t, \hat{P}_t)$.

At this point, we implement the following decision rule: for each action $a \in A(x_t)$, we compute the following \textit{index}, again maximizing value within some distance of the current estimates:
\begin{equation*} 
u_a(t) = \sup_{\underline{q} \in \Theta} \left\{ L(x_t, a, \underline{q}, \hat{\underline{v}}) : \lvert \lvert \underline{\hat{p}}^a_{x_t} - \underline{q} \rvert\rvert_1 \leq \sqrt{\frac{ 2 \ln t}{T^a_{x_t}(t)}} \right\},
\end{equation*}
and take action 
\begin{equation*}
\pi(t) = \argmax_{a \in A(x_t)} u_a(t).
\end{equation*}

\subsection{A Thompson-Type Algorithm for MDPs Under Uncertain Transitions}
In MDP-UCB, MDP-DMED, and OLP, above, we realized the notion of ``exploration'' in terms of considering alternative hypotheses that were ``close'' to the current estimates within $\Theta$, interpreting closeness in terms of ``plausibility''. In this section, we consider an alternative form of exploration through random sampling over $\Theta$, based on the current available data. Given a uniform prior over $\Theta$, the posterior for $\underline{p}^a_x$ is given by a Dirichlet distribution with the observed occurrences. Posterior Sampling (MDP-PS) proceeds in the following way:

At any time $t \geq 1$, let $x_t$ be the current state, and construct the estimators as in the MDP-UCB algorithm in Section \ref{sec:UCB}, $\hat{P}_t$, $\hat{A}_t$, and utilize these to compute the estimated optimal values, $\hat{\phi}_t = \phi(\hat{A}_t, \hat{P}_t)$ and $\underline{\hat{v}}_t = \underline{v}( \hat{A}_t, \hat{P}_t)$. In addition, generate the following random vectors:

For each action $a \in A(x_t)$, let $\underline{T}^a_{x_t}(t) = [ T^a_{x_t,y}(t) ]_{y \in S}$ be the vector of observed transition counts from state $x_t$ to $y$ under action $a$. Generate the random vector $\underline{Q}$ according to
\begin{equation*}
\underline{Q}^{a}(t) \sim \text{Dir}( \underline{T}^a_{x_t}(t) ).
\end{equation*}
The $\underline{Q}^a(t)$ are distributed according to the joint posterior distribution of $\underline{p}^a_{x_t}$ with a uniform prior.

At this point, define the following values as posterior sampled estimates of the potential value $L$ of each action:
\begin{equation*}
W_a(t) = r_{x_t,a} + \sum_{y} Q^a_y(t) \hat{v}_y,
\end{equation*}
and take action $\pi(t) = \argmax_{a \in A(x_t)} W_a(t).$

In this way, we probabilistically explore likely hypotheses within $\Theta$, and act according to the action with best hypothesized value.

\section{Accelerating Computation} \label{sec:computation}
All of the above algorithms require computing the estimated optimality values $\hat{\phi}_t, \underline{\hat{v}}_t$ each round. This is an issue, but efficient linear programming formulations exist to solve the optimality equations in Eq. \eqref{eqn:dp} see for example \citet{derman1970finite}. It may also be possible to adapt the method of \cite{lakshminarayanan2017linearly} for approximately solving MDPs, among others, to our undiscounted and potentially changing MDP setting.

However, each of these algorithms additionally has unique computational challenges, through computations over the high dimensional parameter space $\Theta$ due to the typically high cardinality of the state space. 

\subsection{MDP-UCB}
We will first examine the MDP-UCB algorithm from Section \ref{sec:UCB}. Recalling the notation that $\mathbf{I}(\underline{p}, \underline{q}) = \sum_x p_x \ln(p_x/q_x)$, MDP-UCB has to repeatedly solve the following optimization problem:
\begin{equation*}
\begin{split}
C(\underline{p}, \underline{v}, \delta) & = \sup_{\underline{q} \in \Theta} \left\{ \sum_{x} q_x v_x :  \mathbf{I}( \underline{p}, \underline{q})  \leq \delta \right\}
\end{split}.
\end{equation*}
The index of the MDP-UCB algorithm may be efficiently expressed in terms of the $C$ function above which we will refer to as the $\underline{q}$-Formulation.

This represents an $\lvert S \rvert$-dimensional non-linear constrained optimization problem which is not, in general, easy to solve.

For mathematical completeness, as well as for practical implementation, we first analyze some trivial cases. Let $\mu_p = \sum_x p_x v_x$ and $V = \max_x v_x$, then
\begin{theorem} \label{theorem-UCB-Simplification-degenerate}
    The value of $C(\underline{p}, \underline{v}, \delta)$ can be easily found in the following cases:
    \begin{itemize}
        \item If $\delta <0$ then the optimization problem, $C(\underline{p}, \underline{v}, \delta)$ is infeasible and we say $C(\underline{p}, \underline{v}, \delta) = -\infty$.
        \item If $\delta = 0$, then $C(\underline{p}, \underline{v}, \delta) = \mu_p$.
        \item If $\delta > 0$ and $v_{x_1} = v_{x_2}$ for all $x_1,x_2 \in S$, then $C(\underline{p}, \underline{v}, \delta) = \mu_p$.
    \end{itemize} 
\end{theorem}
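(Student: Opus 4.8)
The plan is to dispatch each of the three cases directly from the basic properties of the Kullback--Leibler divergence and the geometry of $\Theta$, so the argument is essentially bookkeeping. The one fact doing all the work is Gibbs' inequality: for any $\underline{p}, \underline{q} \in \Theta$ we have $\mathbf{I}(\underline{p},\underline{q}) \geq 0$, with equality if and only if $\underline{p} = \underline{q}$. Note that every such $\mathbf{I}(\underline{p},\underline{q})$ is finite precisely because $\Theta$ forbids zero coordinates, so each $\ln(p_y/q_y)$ is well defined; this lets us apply the inequality, and its equality case, without qualification.

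For the first bullet I would argue as follows. Since $\mathbf{I}(\underline{p},\underline{q}) \geq 0$ for every feasible $\underline{q}$, when $\delta < 0$ no $\underline{q} \in \Theta$ can satisfy $\mathbf{I}(\underline{p},\underline{q}) \leq \delta$. Thus the feasible region of the optimization problem $C(\underline{p},\underline{v},\delta)$ is empty, and the supremum over the empty set is $-\infty$ by the usual convention, matching the statement.

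For the second bullet, the same inequality shows that the constraint $\mathbf{I}(\underline{p},\underline{q}) \leq 0$ is equivalent to $\mathbf{I}(\underline{p},\underline{q}) = 0$, which by the equality case holds exactly when $\underline{q} = \underline{p}$. Hence the feasible set is the singleton $\{\underline{p}\}$, and the objective evaluated there is $\sum_x p_x v_x = \mu_p$, so $C(\underline{p},\underline{v},0) = \mu_p$. For the third bullet, suppose $\delta > 0$ and $v_{x_1} = v_{x_2}$ for all $x_1,x_2 \in S$; write $v_x \equiv V$. Then for any $\underline{q} \in \Theta$ the objective is $\sum_x q_x v_x = V \sum_x q_x = V$, and likewise $\mu_p = \sum_x p_x v_x = V$. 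Since $\underline{p}$ is feasible ($\mathbf{I}(\underline{p},\underline{p}) = 0 \leq \delta$), the feasible set is nonempty, and the objective is the constant $V$ on it, so $C(\underline{p},\underline{v},\delta) = V = \mu_p$.

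I do not expect any genuine obstacle in this proof; the only subtlety worth flagging is the finiteness of the divergences on $\Theta$, which is why the parameter space was set up with strictly positive coordinates in the first place. It may also be worth remarking that in the third case the divergence constraint is entirely inactive, which isolates exactly the situation ($\delta > 0$ with $\underline{v}$ nonconstant) that requires the more substantive analysis carried out in the sequel.
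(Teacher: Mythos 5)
Your proof is correct and follows essentially the same route as the paper's: Gibbs' inequality handles the cases $\delta<0$ and $\delta=0$, and the constancy of the objective $\sum_x q_x v_x$ handles the case of constant $\underline{v}$. The extra remarks on finiteness of $\mathbf{I}$ over $\Theta$ and nonemptiness of the feasible set are fine refinements but do not change the argument.
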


Proof of this theorem is provided in Appendix \ref{sec:UCB-degenerate-Proof}.

For other cases, we can reduce this to solving a $2$ dimensional system of non-linear equations, with unknowns $\mu_q^*$ and $\lambda$ as follows.
\begin{theorem} \label{theorem-UCB-Simplification}
    For any $\delta >0$ and $\underline{v}$ such that $v_{x_1} \neq v_{x_2}$ for some  $x_1,x_2 \in S$,
    $$C(\underline{p}, \underline{v}, \delta) = \mu_q^*,$$ 
    where
    \begin{align*}
    \sum_{x \in S} p_x \ln \left( 1 + \dfrac{v_x - \mu_q^*}{\lambda} \right) = \delta, \\
    \sum_x p_x \dfrac{\lambda}{\lambda + v_x - \mu_q^*} =1, \\
    \mu_p < \mu_q^* < V \text{ and }  \lambda < \mu_q^* -V .
    \end{align*}
\end{theorem}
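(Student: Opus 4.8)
The plan is to treat $C(\underline{p},\underline{v},\delta)$ as a convex program, characterize its maximizer through the Karush--Kuhn--Tucker conditions, and then reparametrize the stationarity relation so that it becomes exactly the two displayed equations, with the inequalities identifying the correct branch.

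First I would record the structural facts. The objective $\underline{q}\mapsto\sum_x q_x v_x$ is linear, the simplex constraint is affine, and $\underline{q}\mapsto\mathbf{I}(\underline{p},\underline{q})$ is strictly convex on $\Theta$; since $\underline{q}=\underline{p}$ satisfies $\mathbf{I}=0<\delta$, Slater's condition holds, strong duality applies, a maximizer $\underline{q}^*$ exists, and the KKT conditions are both necessary and sufficient. Two limiting observations confine $\underline{q}^*$: if any coordinate $q_y\to 0$ then, since $p_y>0$, $\mathbf{I}(\underline{p},\underline{q})\to+\infty$, so $\underline{q}^*$ lies in the interior of $\Theta$ and only the divergence constraint can be binding; and $\sum_x q_x v_x=V$ forces a point mass, which again makes $\mathbf{I}$ infinite, so for finite $\delta$ the optimal value is strictly below $V$. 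Since $\underline{p}$ is feasible with value $\mu_p$ and, because $\underline{v}$ is non-constant and $\delta>0$, one can shift a little mass toward a coordinate achieving $V$ to strictly increase the objective whenever the constraint has slack, we obtain $\mu_p<\mu_q^*<V$ and that $\mathbf{I}(\underline{p},\underline{q}^*)=\delta$ at the optimum.

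Next I would form the Lagrangian with multiplier $\beta\ge 0$ for $\mathbf{I}(\underline{p},\underline{q})-\delta\le 0$ and $\nu$ for $\sum_x q_x=1$, and set $\partial/\partial q_x$ to zero, obtaining $v_x+\beta p_x/q_x-\nu=0$, hence $q_x=\beta p_x/(\nu-v_x)$. Positivity of $q_x,p_x$ forces $\beta>0$ (so the constraint is active, consistent with the previous step) and $\nu>V$. Writing $\mu_q^*:=\sum_x q_x v_x=C(\underline{p},\underline{v},\delta)$ and $\lambda:=\mu_q^*-\nu$, note that $\lambda+v_x-\mu_q^*=v_x-\nu=-(\nu-v_x)$, so the stationarity relation becomes $q_x=p_x\,\lambda/(\lambda+v_x-\mu_q^*)$ with $\beta=-\lambda>0$; since $\nu>V>\mu_q^*$ this yields $\lambda<\mu_q^*-V<0$, the displayed inequality. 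The normalization $\sum_x q_x=1$ is then precisely the second displayed equation, and since $p_x/q_x=1+(v_x-\mu_q^*)/\lambda$, the active constraint $\mathbf{I}(\underline{p},\underline{q}^*)=\delta$ is precisely the first. Moreover $\mu_q^*$ is not a third independent unknown: multiplying $q_x(\lambda+v_x-\mu_q^*)=p_x\lambda$ by $1$ and summing over $x$, using $\sum_x q_x=\sum_x p_x=1$, forces $\sum_x q_x v_x=\mu_q^*$ automatically, which is why two equations in $(\mu_q^*,\lambda)$ suffice.

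Finally I would verify that this system is solvable in the stated open region and that the solution is unique, so that solving it genuinely returns $C(\underline{p},\underline{v},\delta)$. I would use the one-parameter reduction: for $\nu\in(V,\infty)$ set $q_x(\nu)=c(\nu)p_x/(\nu-v_x)$ with $c(\nu)$ fixed by normalization, and show that $\nu\mapsto\mathbf{I}(\underline{p},\underline{q}(\nu))$ is continuous and strictly decreasing, running from $+\infty$ as $\nu\downarrow V$ to $0$ as $\nu\uparrow\infty$; hence exactly one $\nu$ gives divergence $\delta$, and the corresponding pair $(\mu_q^*,\lambda)=\bigl(\sum_x q_x(\nu)v_x,\ \sum_x q_x(\nu)v_x-\nu\bigr)$ is the unique solution of the displayed system, which by sufficiency of the KKT conditions equals the optimizer. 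The main obstacle I anticipate is not the Lagrangian step but this last one: establishing the strict monotonicity of $\nu\mapsto\mathbf{I}(\underline{p},\underline{q}(\nu))$ cleanly (equivalently, that the reparametrized curve traverses the constraint boundary monotonically), and checking that the range constraints $\mu_p<\mu_q^*<V$ and $\lambda<\mu_q^*-V$ are exactly what isolates the correct branch and rules out spurious roots of the two equations.
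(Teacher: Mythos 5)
Your proposal is correct and follows essentially the same route as the paper's proof: exploit convexity to get existence and sufficiency of stationary points, show the KL constraint is active by a mass-shifting perturbation, apply Lagrange multipliers, and reparametrize the stationarity relation $p_x/q_x^* = 1 + (v_x-\mu_q^*)/\lambda$ into the two displayed equations with the range constraints coming from positivity of $\underline{q}^*$. The only differences are bookkeeping (you define $\lambda$ directly from the normalization multiplier, whereas the paper solves for it via the second moment $\sigma^2_{q^*}$) and your added uniqueness discussion, which is not needed for the theorem since convexity already guarantees that any solution of the system in the stated range yields the global maximum.
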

Proof of this theorem is provided in Appendix \ref{sec:UCB Simplification Proof}.

Solving these systems, which we will refer to as the $(\mu_q^*, \lambda)$-Formulation, provides dramatic speed increases for the implementation of the algorithm (Figure \ref{fig:computation-time}). We also note that the $(\mu_q^*, \lambda)$-Formulation scales manageably with the dimension of the state space, as opposed to the $q$-Formulation. Additionally, the structure of the equations admits several nice solution methods since, for a given $\mu_q$, the second equation has a unique solution for $\lambda$ in the indicated range, and given that solution, the summation in the first equation is increasing to infinity as a function of $\mu_q$. 

\subsection{MDP-DMED}
Next we examine the MDP-DMED algorithm from Section \ref{sec:DMED}. Again, recalling the notation that $\mathbf{I}(\underline{p}, \underline{q}) = \sum_x p_x \ln(p_x/q_x)$, MDP-DMED has to repeatedly solve the following optimization problems:

\begin{equation*}
\begin{split}
D(\underline{p}, \underline{v}, \rho) & = \inf_{\underline{q} \in \Theta} \left\{ \mathbf{I}(\underline{p}, \underline{q}): \sum_x q_x v_x \geq \rho \right\}.
\end{split}
\end{equation*}

The rate function $\mathbf{\tilde{K}}$ of the MDP-DMED algorithm may be efficiently expressed in terms of the $D$ function above which we will refer to as the $\underline{q}$-Formulation. This represents an $\lvert S \rvert$-dimensional non-linear constrained optimization problems, which is not, in general, easy to solve.

As before, we consider some trivial cases first. Let $\mu_p = \sum_x p_x v_x$ and $V = \max_x v_x$, then
\begin{theorem} \label{theorem-DMED-Simplification-degenerate}
    The value of $D(\underline{p}, \underline{v}, \rho)$ and by extension $D_t(a)$ can be easily found in the following cases:
    \begin{itemize}
        \item If $\rho > V$ then the optimization problem, $D(\underline{p}, \underline{v}, \rho)$ is infeasible and we say $D(\underline{p}, \underline{v}, \rho) = \infty$ and $D_t(a) = -T_{x_t,a}(t)$.
        
        \item If $\rho \leq \mu_p$ then $D(\underline{p}, \underline{v}, \rho) = 0$ and we say $D_t(a) = \infty$.
        
        \item If $v_{x_1} \neq v_{x_2}$ for some  $x_1,x_2 \in S$ and $\rho = V$, then optimization problem $D(\underline{p}, \underline{v}, \rho)$ diverges to infinity and we say $D(\underline{p}, \underline{v}, \rho) = \infty$ and $D_t(a) = -T_{x_t,a}(t)$.
    \end{itemize} 
\end{theorem}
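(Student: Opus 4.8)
The plan is to dispatch each of the three bullets by elementary facts about the linear functional $\underline{q}\mapsto\sum_{x}q_xv_x$ on the probability simplex and about the Kullback--Leibler divergence, and then to read off $D_t(a)$ from the value of $D(\underline{p},\underline{v},\rho)$ using the extended-arithmetic conventions $(\ln t)/\infty=0$ and $(\ln t)/0=+\infty$ (for $t$ large enough that $\ln t>0$), consistent with how $\mathbf{\tilde{K}}$ enters $D_t(a)$. Throughout I would fix the notation $S^\star=\{x\in S:v_x=V\}$ for the set of coordinates attaining $V=\max_x v_x$, and use repeatedly that $\underline{p}\in\Theta$ forces $p_x>0$ for every $x\in S$, together with $\mathbf{I}(\underline{p},\underline{q})\geq 0$ for all $\underline{q}$ and $\mathbf{I}(\underline{p},\underline{p})=0$.

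For the first bullet, $\rho>V$: for any probability vector $\underline{q}$ (whether in $\Theta$ or its closure) we have $\sum_x q_x v_x\leq\max_x v_x=V<\rho$, so the constraint $\sum_x q_x v_x\geq\rho$ has no solution, the feasible region is empty, and by the convention that an infimum over the empty set is $+\infty$ we get $D(\underline{p},\underline{v},\rho)=\infty$. Hence $\mathbf{\tilde{K}}$ is infinite and $D_t(a)=(\ln t)/\infty-T_{x_t,a}(t)=-T_{x_t,a}(t)$. For the second bullet, $\rho\leq\mu_p$: now $\underline{q}=\underline{p}$ is feasible, since $\sum_x p_x v_x=\mu_p\geq\rho$ and $\underline{p}\in\Theta$, and it achieves $\mathbf{I}(\underline{p},\underline{p})=0$; combined with $\mathbf{I}\geq 0$ this gives $D(\underline{p},\underline{v},\rho)=0$, so $\mathbf{\tilde{K}}=0$ and $D_t(a)=(\ln t)/0-T_{x_t,a}(t)=+\infty$.

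For the third bullet, $\underline{v}$ non-constant and $\rho=V$: non-constancy gives $S\setminus S^\star\neq\emptyset$. For any $\underline{q}\in\Theta$, $\sum_x q_x v_x=V-\sum_{x\notin S^\star}q_x(V-v_x)<V$, since each $q_x>0$ and $V-v_x>0$ on $S\setminus S^\star$; thus the feasible set inside $\Theta$ is empty and $D(\underline{p},\underline{v},\rho)=\infty$ by the same convention as above. To match the stronger phrasing that the problem ``diverges,'' I would additionally take any sequence $\underline{q}^{(n)}\in\Theta$ with $\sum_x q^{(n)}_x v_x\to V$; then $\sum_{x\notin S^\star}q^{(n)}_x(V-v_x)\to 0$ forces $q^{(n)}_x\to 0$ for every $x\notin S^\star$. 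Writing $\mathbf{I}(\underline{p},\underline{q})=-H(\underline{p})+\sum_x p_x\ln(1/q_x)$ with $H(\underline{p})=-\sum_x p_x\ln p_x$ finite, and using $\ln(1/q_x)\geq 0$ because $q_x\leq 1$, one gets $\mathbf{I}(\underline{p},\underline{q}^{(n)})\geq -H(\underline{p})+\sum_{x\notin S^\star}p_x\ln(1/q^{(n)}_x)\to+\infty$ since every $p_x>0$. So the objective is unbounded along every near-feasible sequence, $D(\underline{p},\underline{v},\rho)=\infty$, and as in the first bullet $D_t(a)=-T_{x_t,a}(t)$.

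The first two bullets are routine. The one place to be careful is the third: because $\Theta$ is the \emph{open} simplex, the feasible set is literally empty exactly as in the genuinely infeasible first bullet, so ``$D=\infty$'' there holds only by the empty-infimum convention; the substantive content — that $\mathbf{I}$ blows up as $\underline{q}$ is pushed toward the face supported on $S^\star$ — relies on $\underline{p}$ having full support, i.e. $\underline{p}\in\Theta$, and this is the hypothesis that makes the ``divergence'' statement meaningful rather than vacuous. I expect this bookkeeping about open versus closed simplex, and the choice of which lower bound on $\mathbf{I}(\underline{p},\underline{q})$ to use, to be the only mild obstacle.
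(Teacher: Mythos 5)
Your proposal is correct and follows essentially the same route as the paper's own proof: bullet one by boundedness of $\underline{q}\mapsto\sum_x q_x v_x$ on the simplex, bullet two by feasibility of $\underline{q}=\underline{p}$ together with Gibbs' inequality, and bullet three by observing that any $\underline{q}$ with mean equal to $V$ must place zero mass on some state and hence lies outside the open simplex $\Theta$. Your additional sequence argument showing that $\mathbf{I}(\underline{p},\underline{q}^{(n)})\to\infty$ along near-feasible sequences is a modest strengthening the paper omits, and it better substantiates the theorem's phrasing that the problem ``diverges to infinity'' rather than merely being vacuously infeasible.
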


Proof of this theorem is provided in Appendix \ref{sec:DMED-degenerate-Proof}.

For other cases, this optimization problem reduces to solving a $1$-dimensional system of non-linear equations with one unknown, $\lambda$,  as follows:
\begin{theorem} \label{theorem-DMED-Simplification}
    For any $\underline{v}$ such that $v_{x_1} \neq v_{x_2}$ for some $x_1,x_2 \in S$ and $\mu_p < \rho < V$,
    \begin{equation*}
    D(\underline{p}, \underline{v}, \rho) = \sum_x p_x \ln( 1 + (\rho - v_x)\lambda),
    \end{equation*}
    where
    \begin{align*}
    \sum_x p_x \frac{ \rho - v_x }{1 + (\rho - v_x)\lambda} = 0, \\
    0 < \lambda < \frac{1}{V - \rho}.
    \end{align*}
\end{theorem}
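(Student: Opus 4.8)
The plan is to treat $D(\underline{p}, \underline{v}, \rho)$ as a convex optimization problem and characterize its minimizer via Lagrange/KKT conditions. First I would observe that the objective $\underline{q} \mapsto \mathbf{I}(\underline{p}, \underline{q})$ is strictly convex on the simplex and the constraint $\sum_x q_x v_x \geq \rho$ is linear, so the problem is a convex program; moreover, under the hypothesis $\mu_p < \rho < V$ the constraint set has nonempty interior (one can nudge mass toward the state achieving $V$ while staying in $\Theta$), so Slater's condition holds and strong duality applies. Since $\rho > \mu_p$, the point $\underline{q} = \underline{p}$ is infeasible, hence the value constraint $\sum_x q_x v_x \geq \rho$ is active at the optimum: $\sum_x q_x^* v_x = \rho$. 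I would also carry along the normalization constraint $\sum_x q_x = 1$ explicitly (the positivity constraints $q_x > 0$ will turn out to be inactive, which I would verify at the end from the closed form).

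Next I would form the Lagrangian with a multiplier $\lambda \geq 0$ for the (rewritten) constraint $\rho - \sum_x q_x v_x \leq 0$ and a multiplier $\nu$ for $\sum_x q_x = 1$:
\begin{equation*}
\mathcal{L}(\underline{q}, \lambda, \nu) = \sum_x p_x \ln\frac{p_x}{q_x} + \lambda\Bigl(\rho - \sum_x q_x v_x\Bigr) + \nu\Bigl(\sum_x q_x - 1\Bigr).
\end{equation*}
Setting $\partial \mathcal{L}/\partial q_x = -p_x/q_x - \lambda v_x + \nu = 0$ gives $q_x = p_x/(\nu - \lambda v_x)$. Imposing $\sum_x q_x = 1$ would let me solve for $\nu$; the cleanest route is to show $\nu = \lambda \rho + 1$. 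Indeed, multiply the stationarity relation $p_x = q_x(\nu - \lambda v_x)$ by $1$ and sum, using $\sum q_x = 1$ and $\sum q_x v_x = \rho$, to get $1 = \nu - \lambda\rho$, i.e. $\nu = 1 + \lambda\rho$. Substituting back, $q_x^* = p_x/(1 + \lambda(\rho - v_x)) = p_x/(1 + (\rho - v_x)\lambda)$. Feeding this into the active value constraint $\sum_x q_x^* v_x = \rho$, or equivalently into $\sum_x q_x^*(\rho - v_x) = \rho - \sum q_x^* v_x = 0$, yields exactly the stated equation $\sum_x p_x \frac{\rho - v_x}{1 + (\rho - v_x)\lambda} = 0$. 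Finally, plugging $q_x^*$ into the objective gives $\mathbf{I}(\underline{p},\underline{q}^*) = \sum_x p_x \ln(1/q_x^* \cdot p_x) = \sum_x p_x \ln(1 + (\rho - v_x)\lambda)$, the claimed value.

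The remaining work is to pin down the range $0 < \lambda < 1/(V - \rho)$ and to show the defining equation has a unique root there. For $q_x^* > 0$ for every $x$ we need $1 + (\rho - v_x)\lambda > 0$ for all $x$; the binding case is the state achieving $v_x = V$, giving $1 - (V - \rho)\lambda > 0$, i.e. $\lambda < 1/(V - \rho)$; and $\lambda = 0$ is excluded because it would force $q^* = p$, contradicting $\sum q_x^* v_x = \rho > \mu_p$ — so complementary slackness with the active constraint forces $\lambda > 0$. For uniqueness I would show $g(\lambda) := \sum_x p_x \frac{\rho - v_x}{1 + (\rho - v_x)\lambda}$ is strictly decreasing on $(0, 1/(V-\rho))$: differentiating, $g'(\lambda) = -\sum_x p_x \frac{(\rho - v_x)^2}{(1 + (\rho - v_x)\lambda)^2} < 0$ since $v$ is non-constant so some $\rho - v_x \neq 0$. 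At $\lambda = 0$, $g(0) = \rho - \mu_p > 0$; as $\lambda \uparrow 1/(V - \rho)$, the term(s) with $v_x = V$ have numerator $\rho - V < 0$ and denominator $\to 0^+$, so $g(\lambda) \to -\infty$. By the intermediate value theorem and strict monotonicity there is a unique root $\lambda^\star$ in the open interval, and this $\lambda^\star$ together with $\underline{q}^*$ satisfies all KKT conditions; by convexity it is the global optimum. The main obstacle I anticipate is the bookkeeping to confirm Slater/strong duality cleanly and to rule out boundary minimizers (some $q_x \to 0$) — i.e. justifying that the interior stationary point found by Lagrange multipliers is genuinely optimal rather than merely stationary; everything after that is the routine algebra sketched above.
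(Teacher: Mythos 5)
Your proposal is correct and follows essentially the same route as the paper's proof: both reduce the convex program via Lagrange multipliers to the closed form $q_x^* = p_x/(1+(\rho-v_x)\lambda)$, argue the value constraint must be active, and pin $\lambda$ to the interval $(0, 1/(V-\rho))$ where the defining equation has a unique root by monotonicity. The only cosmetic differences are that the paper establishes activity of the constraint via an explicit perturbation lemma rather than the ``unconstrained minimizer is infeasible'' argument, and it eliminates the second multiplier by taking first and second moments rather than summing the stationarity condition directly.
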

Proof of this theorem is provided in Appendix \ref{sec:DMED Simplification Proof}.

As with the MDP-UCB case, solving this system, which we will refer to as the $ \lambda$-Formulation, provides dramatic speed increases for the implementation of the algorithm (Figure \ref{fig:computation-time}). We also note that the $\lambda$-Formulation scales manageably with the dimension of the state space, as opposed to the $q$-Formulation. Additionally, the $\lambda$-Formulation structurally lends itself well to solutions. Over the indicated range, the summation is positive and constant in the limit as $\lambda \to 0$, and monotonically decreasing, diverging to negative infinity as $\lambda \to 1/(V - \rho)$. Hence the solution is unique, and can easily be found via bisection.

\subsection{OLP}
Next we examine the OLP algorithm from Section \ref{sec:OLP}. OLP has to repeatedly solve the following optimization problem:
\begin{equation*}
\begin{split}
B(\underline{p}, \underline{v}, \delta) & = \sup_{\underline{q} \in \Theta} \left\{ \sum_{x} q_x v_x : \lvert \lvert \underline{\hat{p}}^{x_t} - \underline{q} \rvert\rvert_1 \leq \delta \right\}
\end{split}.
\end{equation*}
The index of the OLP algorithm may be efficiently expressed in terms of the $B$ function above. $B(\underline{p}, \underline{v}, \delta)$ is equivalent to the following linear program:

\begin{alignat*}{2}
\text{max}_{\underline{q}^+,\underline{q}^-} \quad   & \sum_{x \in S} v_x(q{^-}{_x} - q{^+}{_x} +  p{_x}),  & \\ 
\nonumber \\
\nonumber \text{s.t. } & \\
& \sum_{x \in S} q{^+}{_x} +  q{^-}{_x} \leq \delta, \\
& \sum_{x \in S} q{^-}{_x} - q{^+}{_x} =  0, \\
& q{^+}{_x} - q{^-}{_x}  \leq p_x  & \forall x \in S, \\
& q{^+}{_x}, q{^-}{_x}  \geq 0 & \forall  x \in S.
\end{alignat*}

This represents an $\lvert S \rvert$-dimensional linear program, which can generally be computed quite efficiently. However, as the dimension of the state space increases we incur a greater computational burden (Figure \ref{fig:computation-time}).

\subsection{MDP-PS}
The most attractive advantage of MDP-PS is the reduced computational cost, relative to the other three proposed algorithms (Figure \ref{fig:computation-time-best}). Notice there is no extra optimization problem that needs to be solved. In the MDP-UCB algorithm, at every time $t$, we had to iteratively solve $\lvert A(x_t) \rvert$ instances of $C(\underline{p}, \underline{v}, \delta)$, for OLP $\lvert A(x_t) \rvert$ instances of $B(\underline{p}, \underline{v}, \delta)$, and for MDP-DMED, $\lvert A(x_t) \rvert$ instances of $D(\underline{p}, \underline{v}, \rho)$. Under MDP-PS, the computational burden stems from sampling from the Dirichlet distribution for each action (again, $\vert A(x_t)\vert$ steps), but this is a well studied problem with many efficiently implemented solutions (see for example \citet{mckay2003information}). Specific properties of the MDP-PS algorithm may still make these other algorithms worth pursuing, however, as seen in Section \ref{sec:comparison}.

\subsection{Computation Time Comparison}
To demonstrate the computational time savings achieved by these simplifications we randomly generated the parameters for 15 different action indices and timed how long each algorithm took to solve. We repeated this for 4 different values of $\vert S \rvert$, the dimension of the state space, $10$, $100$, $1,000$, and $10,000$. In Figure \ref{fig:computation-time}, we plot the mean computation time as $\vert S \rvert$ increases, for each algorithm, [1] MDP-PS, [2] MDP-DMED $\lambda$-Formulation, [3] MDP-UCB $(\mu_q^*, \lambda)$-Formulation , [4] MDP-DMED  $\underline{q}$-Formulation , [5] MDP-UCB  $\underline{q}$-Formulation, and [6] OLP, along with a $95\%$ confidence interval. 

In order to keep the comparisons as equitable as possible, the optimization problem for all the algorithms (with the exception of MDP-PS) were solved to within 4 digits of accuracy using TensorFlow for Python \citep{abadi2016tensorflow}. MDP-PS used SciPy's random Dirichlet generator. They were all run on a MacBook Pro with a 3.1 Ghz i7 processor with 16GB DDR3 RAM.

\begin{figure}
    \begin{center}
        \includegraphics[width=\textwidth]{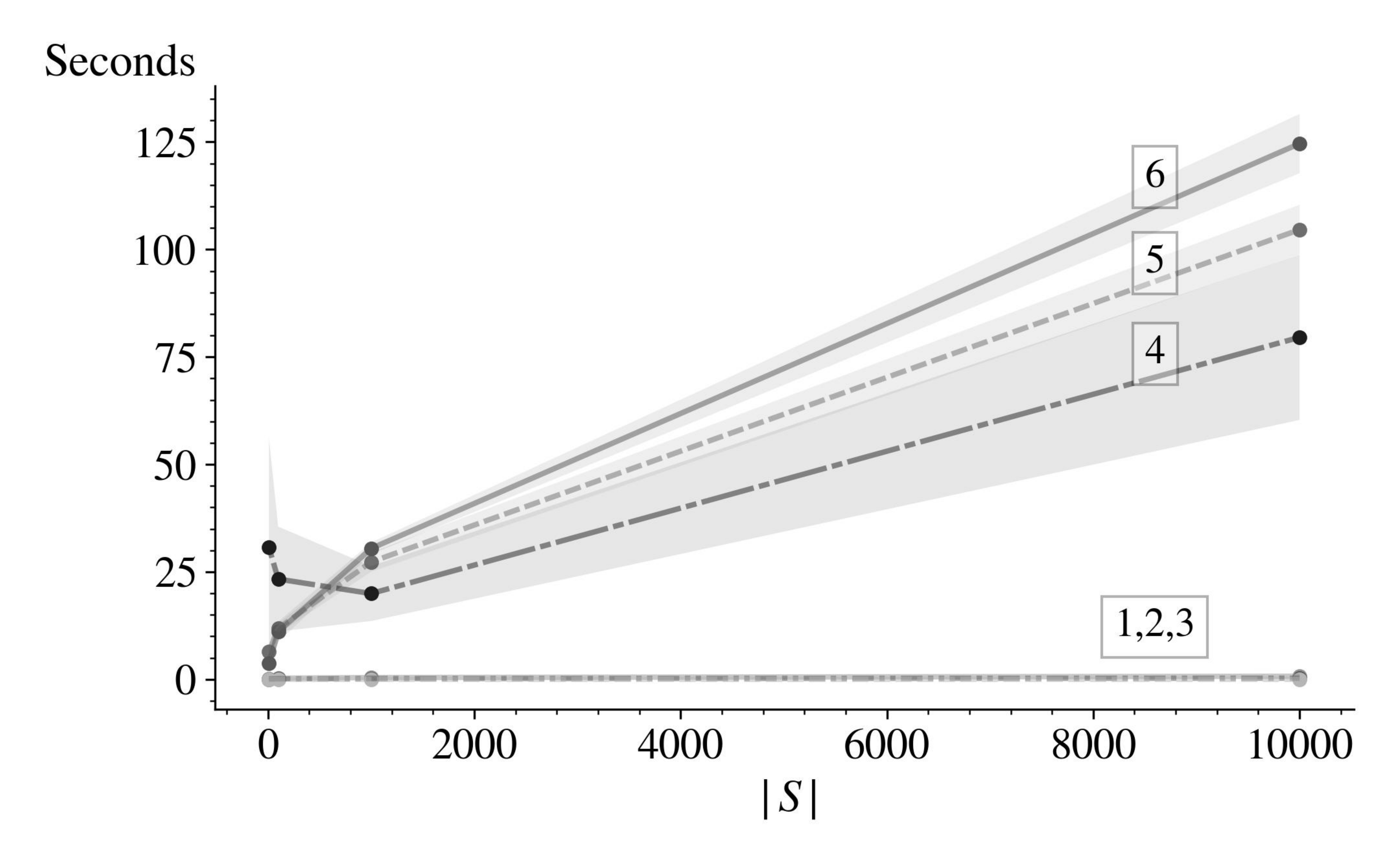}	
        \caption{\small Computation time as $\lvert S \rvert$ increases \label{fig:computation-time}} 
    \end{center}
\end{figure}

The top three fastest algorithms were [1] MDP-PS, [2] MDP-DMED $\lambda$-Formulation, and [3] MDP-UCB $(\mu_q^*, \lambda)$-Formulation. Figure \ref{fig:computation-time-best} shows these three in more detail.

\begin{figure}
    \begin{center}
        \includegraphics[width=\textwidth]{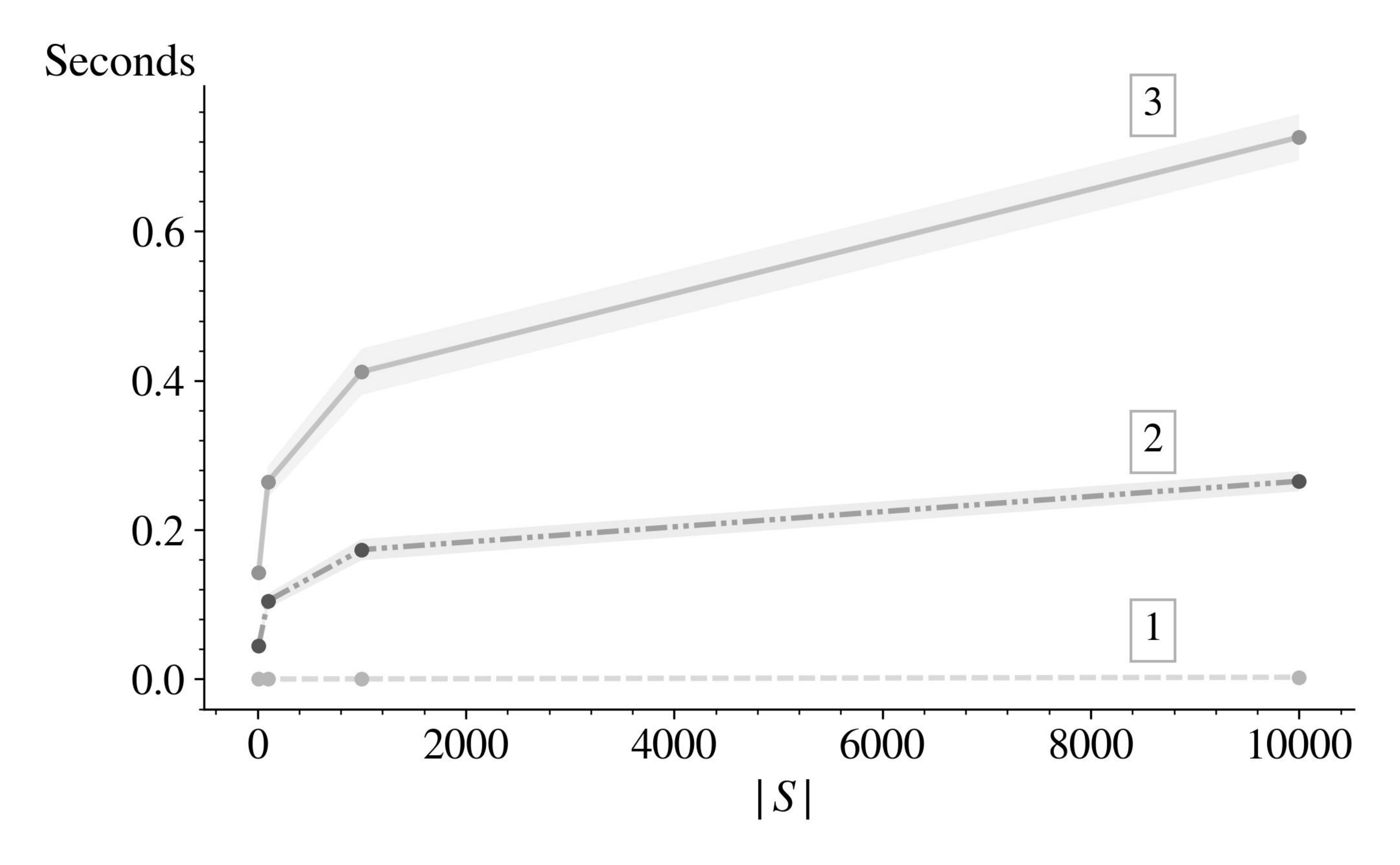}	
        \caption{\small Computation time as $\lvert S \rvert$ increases for the top three performers  \label{fig:computation-time-best}} 
    \end{center}
\end{figure}

From Figure \ref{fig:computation-time} we can see the dramatic savings achieved by [2] MDP-DMED using the $\lambda$-Formulation, and [3] MDP-UCB using the $(\mu_q^*, \lambda)$-Formulation as compared to [4,5] the $\underline{q}$-Formulations. [6] OLP also suffers from increasing computation time as the dimension of the state space increases. OLP performs the worst in terms of computational time which is likely due to the fact that we are not using a specialized fast LP solver but rather TensorFlow.

In Figure \ref{fig:computation-time-best} we can see the relative performances of the top three algorithms. [1] MDP-PS, unsurprisingly with the fastest, followed by [2] MDP-DMED using the $\lambda$-Formulation with its single unknown, and then [3] MDP-UCB using the $(\mu_q^*, \lambda)$-Formulation with its two unknowns.

The absolute time is not as important as the relative time. There are numerous ways to achieve significantly faster absolute time but our focus here is to demonstrate the relative speed increase gained by using our simplifications. In addition, one can get raw computational time savings by developing a devoted optimizer for problems of this type but if we restrict to using a generic black box optimizer, the method we employed seems a reasonable reflection of what one would do.

\section{Comparison of Performance} \label{sec:comparison}
In this section we discuss the results of our simulation test of these algorithms on a small example problem. There is nothing particularly special about the values for this example, and we observe similar results under other values. Our example had 3 states ($x_1,x_2,$ and $x_3$) with 2 available actions ($a_1$ and $a_2$) in each state. Below we show the transition probabilities, as well as the reward, returned under each action.
{\small
    \begin{center}
        
        $P[a_1] =$ \begin{tabular}{| l | c | c | c | }
            \hline
            & $x_1$ & $x_2$ & $x_3$ \\ \hline
            $x_1$ & 0.04 & 0.69 & 0.27 \\ \hline
            $x_2$ & 0.88 & 0.01 & 0.11 \\ \hline
            $x_3$ & 0.02 & 0.46 & 0.52 \\ \hline
        \end{tabular},
    \end{center}
    \begin{center}
        $P[a_2] =$ \begin{tabular}{| l | c | c | c | }
            \hline
            & $x_1$ & $x_2$ & $x_3$ \\ \hline
            $x_1$ & 0.28 & 0.68 & 0.04 \\ \hline
            $x_2$ & 0.26 & 0.33 & 0.41 \\ \hline
            $x_3$ & 0.43 & 0.35 & 0.22 \\ \hline
        \end{tabular},
    \end{center}
    
    \begin{center}
        $R=$ \begin{tabular}{| l | c | c | c | }
            \hline
            & $x_1$ & $x_2$ & $x_3$ \\ \hline
            $a_1$ & 0.13 & 0.47 & 0.89 \\ \hline
            $a_2$ & 0.18 & 0.71 & 0.63 \\ \hline
        \end{tabular}.
    \end{center}
}
If these transition probabilities were known, the optimal policy for this MDP would be  $\pi^*(x_1) = a_1, \pi^*(x_2) = a_2,$ and $ \pi^*(x_3) =  a_1$.

We simulated each algorithm 100 times over a time horizon of 10,000 and for each time step we computed the mean regret as well as the variance. In Figure \ref{fig:simulation}, we plot the mean regret over time for each algorithm, [1] MDP-PS, [2] MDP-UCB, [3] OLP, and [4] MDP-DMED, along with a $95\%$ confidence interval for all sample paths.

\begin{figure}
    \begin{center}
        \includegraphics[width=\textwidth]{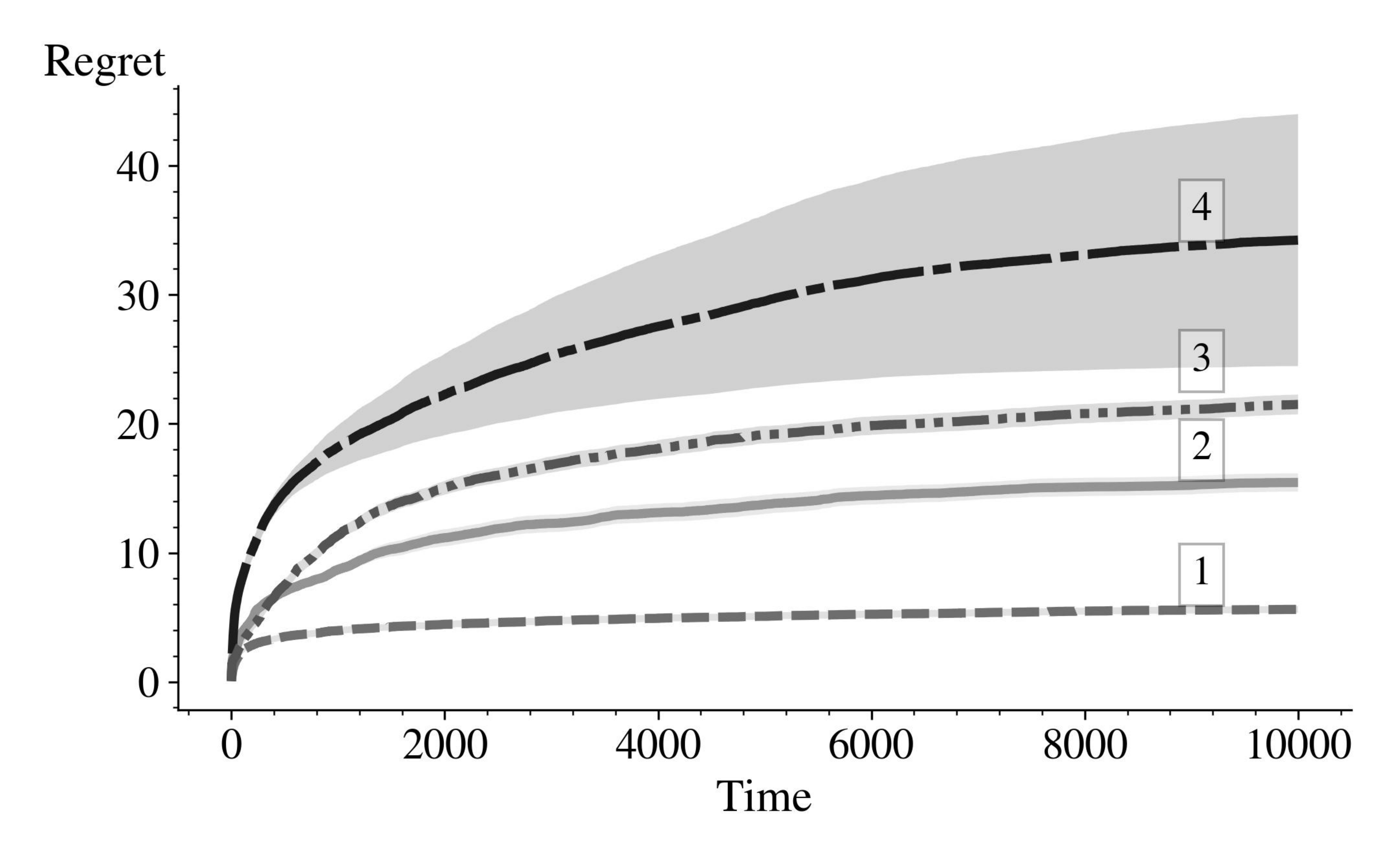}	
        \caption{\small Average cumulative regret over time for each algorithm \label{fig:simulation}} 
    \end{center}
\end{figure}

We can see that all algorithms seem to have logarithmic growth of regret. There are a few interesting differences that the plot highlights, at least for these specific parameter values:

MDP-DMED has not only the highest finite time regret, but also large variance that seems to increase over time. This seems primarily due to the ``epoch'' based nature of the algorithm, which results in exponentially long periods when the algorithm may get trapped taking sub-optimal actions, incurring large regret until the true optimal actions are discovered. The benefit of this epoch structure is that once the optimal actions are discovered, they are taken for exponentially long periods, to the exclusion of sub-optimal actions.

As expected, see \citet{tewari2008optimistic}, OLP has a higher finite time regret when compared to MDP-UCB, but still achieves logarithmic growth.

MDP-PS seems to perform best, exhibiting lowest finite time regret as well as the tightest variance. This seems largely in agreement with the performance of PS-type algorithms in other bandit problems as well, in which they are frequently asymptotically optimal c.f. \citet{cowan2017normal} and references therein.

\subsection{Algorithm Robustness---Inaccurate Priors}

How do these algorithms respond to potentially ``unlucky'' or non-representative streaks of data? How does bad initial estimates effect their performance? Can these algorithms be fooled, and what are the resulting costs before they recover? This is a practically important question, in terms of data security and risk assessment, but also an important element of evaluating a learning algorithm. How does the learning agent respond to non-ideal conditions?

To test these algorithms, we ``rigged'' or biased the first 60 actions and transitions, such that under the estimated transition probabilities the optimal policy would be to activate the sub-optimal action in each state. In more detail, let $T^a_{x,y}$ be the number of times we transitioned from state $x$ to state $y$ under action $a$. Then we rigged $T^a$ so that it started like so,

\begin{center}
    
    $T[a_1] =$ \begin{tabular}{| l | c | c | c | }
        \hline
        & $x_1$ & $x_2$ & $x_3$ \\ \hline
        $x_1$ & 8 & 1 & 1 \\ \hline
        $x_2$ & 1 & 1 & 8 \\ \hline
        $x_3$ & 8 & 1 & 1 \\ \hline
    \end{tabular} \ ,\\
\end{center}
\begin{center}
    $T[a_2] =$ \begin{tabular}{| l | c | c | c | }
        \hline
        & $x_1$ & $x_2$ & $x_3$ \\ \hline
        $x_1$ & 1 & 1 & 8 \\ \hline
        $x_2$ & 8 & 1 & 1 \\ \hline
        $x_3$ & 1 & 1 & 8 \\ \hline
    \end{tabular}
\end{center}

Under the resulting (bad) estimated transition probabilities, we have that the (estimated) optimal policy is $\hat{\pi}^*(x_1) = a_2,  \hat{\pi}^*(x_2) = a_1$, and $\hat{\pi}^*(x_3) = a_1$, which in fact chooses the sub-optimal action in each state.

The subsequent performances of the MDP algorithms are plotted in Figure \ref{fig:robustness}. All algorithms still appear  to have logarithmic growth in regret, suggesting they can all `recover' from the initial bad estimates. It is striking though, the extent to which the average regrets for MDP-DMED and MDP-PS are affected, increasing dramatically as a result, MDP-PS demonstrating an increase in variance as well. However, the MDP-UCB algorithm seems relatively stable: its average regret has barely increased, and maintains a small variance. Empirically, this phenomenon appears common for the MDP-UCB algorithm under other extreme conditions. The underlying cause and a rigorous examination of these intuitions, will be explored in a future work.

\begin{figure}
    \begin{center}
        \includegraphics[width=\textwidth]{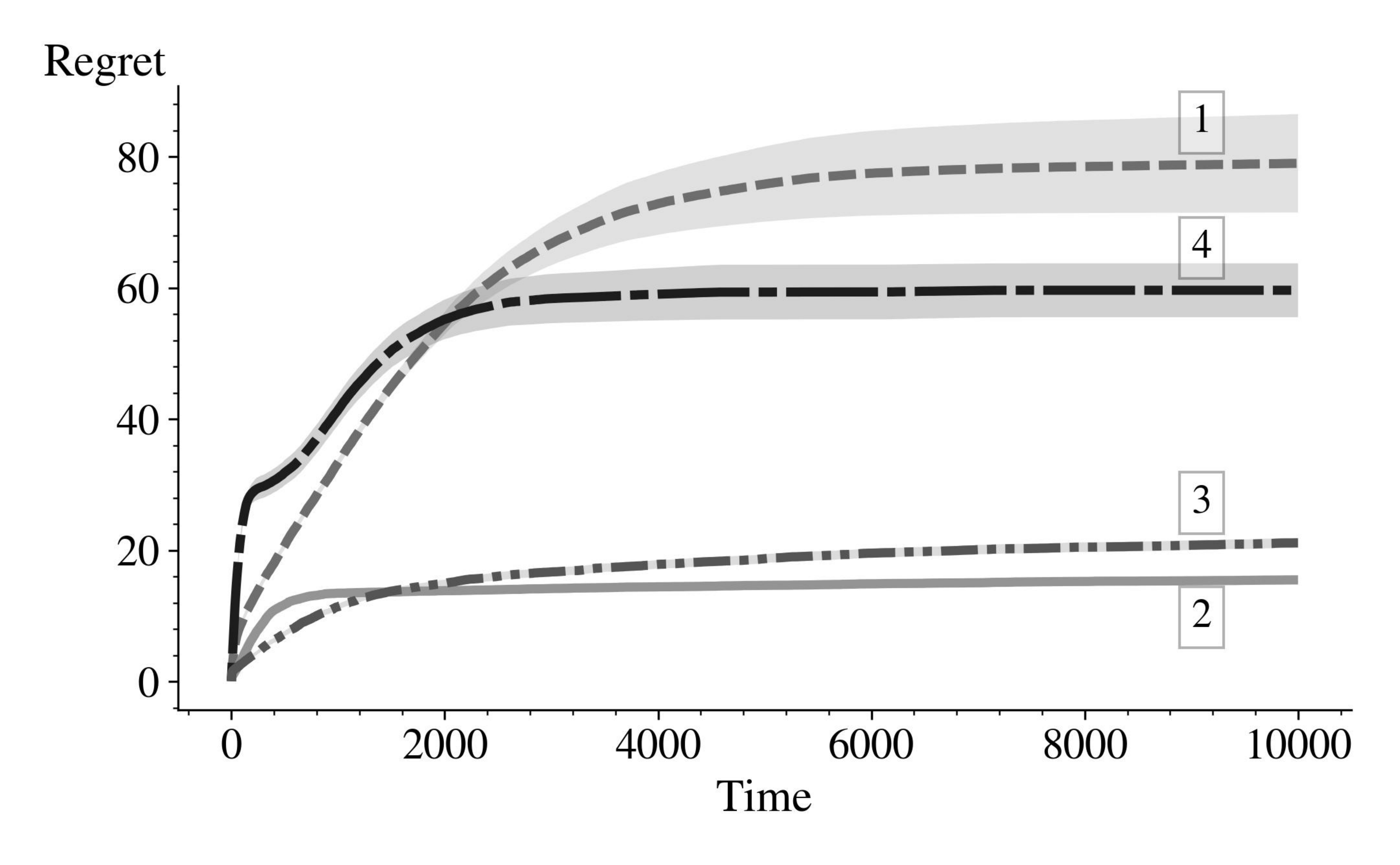}	
        \caption{\small Robustness test. MDP-UCB seems to be largely unaffected by the inaccurate priors.  \label{fig:robustness}} 
    \end{center}
\end{figure}

\section{Conclusion and Future Work}
In this paper we have presented four algorithms adapted from classical multi-armed bandit algorithms that either are provably asymptotically optimal or at least give that appearance in practice. The simplifications for MDP-UCB and MDP-DMED presented here have been shown to dramatically reduce the computational burden for these algorithms, rendering them more useful in practice. As a result, the provably worse performing OLP, no longer has any advantage over them. MDP-DMED under the $\lambda$-Formulation is fast and possibly optimal, but has a high variance for regret that increases over time. While MDP-PS is very fast and appears to be optimal, it is highly sensitive to incorrect priors or extreme sampling errors. MDP-UCB is provably optimal has stable performance under various extreme conditions, and can be computed quickly using the $(\mu_q^*, \lambda)$-Formulation. 

There are various interesting directions to continue this work, we mention a few potential avenues here. The idea of ``exploring the hypothesis space'' is something that extends immediately to the case of unknown rewards. Each of the algorithms presented here can generalize immediately to such situations, though the computational simplifications would need to be modified significantly.

It would also be of theoretical interest to find sufficient conditions on the estimators used to ensure asymptotically optimal performance. This could potentially allow these algorithms to be modified to use other state value estimators (for example, Q-learning \citet{watkins1989learning}) while maintaining their theoretical guarantees. From a practical computational point of view we could consider systems where we can't easily iterate over all possible states, and how these algorithms can be modified to address this. These ideas will be explored in future works.

\acks{We acknowledge support for this work from the National Science Foundation, NSF grant CMMI-1662629.}

\appendix

\section{Proof of Theorems of Section \ref{sec:computation}}

\subsection{Proof of Theorem \ref{theorem-UCB-Simplification-degenerate}}\label{sec:UCB-degenerate-Proof}
First we restate Theorem \ref{theorem-UCB-Simplification-degenerate}:

The value of $C(\underline{p}, \underline{v}, \delta)$ can be easily found in the following cases:
\begin{itemize}
    \item If $\delta <0$ then the optimization problem, $C(\underline{p}, \underline{v}, \delta)$ is infeasible and we say $C(\underline{p}, \underline{v}, \delta) = -\infty$.
    \item If $\delta = 0$, then $C(\underline{p}, \underline{v}, \delta) = \mu_p$.
    \item If $\delta > 0$ and $v_{x_1} = v_{x_2}$ for all $x_1,x_2 \in S$, then $C(\underline{p}, \underline{v}, \delta) = \mu_p$.
\end{itemize}

\begin{proof}
    Recall that $\mathbf{I}( \underline{p}, \underline{q})$ is the KL Divergence from $\underline{p}$ to $\underline{q}$. We then have by Gibb's inequality that $\mathbf{I}( \underline{p}, \underline{q}) \geq 0$, with equality if and only if $\underline{p} =  \underline{q}$. Thus, if $\delta <0$ then the optimization problem is infeasible. If $\delta = 0$ then it has the trivial solution $\underline{q}^* =  \underline{p}$. We therefore take $\delta > 0$. Now, if $v_{x_1} = v_{x_2}$ for all $x_1,x_2 \in S$ then any feasible probability vector $\underline{q}$ is also optimal with $C(\underline{p}, \underline{v}, \delta) = v_x = \mu_p$.
\end{proof}

\subsection{Proof of Theorem \ref{theorem-UCB-Simplification}} \label{sec:UCB Simplification Proof}

In this section we will prove Theorem \ref{theorem-UCB-Simplification}, which we restate here.

Let $\mu_p = \sum_x p_x v_x$ and $V = \max_x v_x$. Then for any $\underline{v}$ such that $v_{x_1} \neq v_{x_2}$ for some  $x_1,x_2 \in S$ and $\delta >0$,
$$C(\underline{p}, \underline{v}, \delta) = \mu_q^*,$$
where
\begin{align*}
\sum_{x \in S} p_x \ln \left( 1 + \dfrac{v_x - \mu_q^*}{\lambda} \right) = \delta, \\
\sum_x p_x \dfrac{\lambda}{\lambda + v_x - \mu_q^*} =1, \\
\mu_p < \mu_q^* < V \text{ and }  \lambda < \mu_q^* -V .
\end{align*}

Before giving the formal proof, it may be helpful to understand the overall conception of the proof. The main idea is the use of Lagrange multiplier techniques, which greatly reduces the dimensionality of the problem to be solved. We are able to exchange from trying to find the optimal probability vector $\underline{q}^*$, to a problem where we need only find two moments of the optimal $\underline{q}^*$, a dramatic dimension reduction. In the MDP-UCB case, it suffices to find the unknown optimal mean of the optimal distribution,  $\underline{q}^*$, $\mu_q^*$, and a value $\lambda = \sigma_{q^*}^2/(\mu_p - \mu_q^*)$ which depends on the optimal, unknown variance.

\begin{proof}
    Recall that,
    \begin{equation}\label{eqn:UCB-originalOPT}
    \begin{split}
    C(\underline{p}, \underline{v}, \delta) & = \sup_{\underline{q} \in \Theta} \left\{ \sum_{x} q_x v_x :  \mathbf{I}( \underline{p}, \underline{q})  \leq \delta \right\} \\
    \end{split}
    \end{equation}
    Since $\{ \underline{q} : \underline{q} \in \Theta,  \mathbf{I}( \underline{p}, \underline{q})  \leq \delta \}$ is a closed compact set, the supremum will be realized by a maximum, and we may express the problem of computing $C(\underline{p}, \underline{v}, \delta)$ in the following form:
    \begin{alignat}{2}
    \text{max}_{\underline{q}} \quad   & \mu_q = \sum_{x \in S} q_x v_x,   & \label{eqn:UCB-align-original-opt-objective}\\
    \nonumber \\
    \nonumber \text{s.t. } & \\
    & \sum_{x \in S} p_x \ln \left( \dfrac{p_x}{q_x} \right) \leq \delta \label{eqn:UCB-align-original-opt-delta-constraint}, \\
    \nonumber & \sum_{x \in S} q_x = 1,\\
    \nonumber &  q_x > 0 & x \in S.
    \end{alignat}
    
    Let $\mu_q^* = \sum_{x \in S} q^*_x v_x $ be the optimal value of the objective function, $\mu_p = \sum_{x \in S} p_x v_x$, and $V = \max_x v_x$. First we will argue that,
    \begin{equation*}
    \mu_p  \leq \mu_q^* < V.
    \end{equation*}
    
    To see the first inequality, observe that $\underline{q} = \underline{p}$ satisfies the constraints and is therefore feasible, hence the objective function at $\underline{q} = \underline{p}$ is less than or equal to the optimum: $\mu_p \leq \mu^*_q$. To see the second, note that $\mu^*_q$ will be an expected value over the $\{ v_x \}$, and hence less than or equal to the maximum, $V$. Because the probabilities in $\underline{q}^*$ are strictly positive, the expected value $\mu^*_q$ must actually be strictly less than the maximum: $\mu^*_q < V$.

    Utilizing Lemma \ref{lem:UCB-kl-constraint-equality} in Appendix \ref{sec:KL-Lemmas}, for any feasible $\underline{q}$ such that the KL Divergence constraint is not achieved with equality, a different feasible $\underline{q^\prime}$ exists with an improved value of the objective function. Hence we can rewrite the optimization problem as,
    
    \begin{alignat}{2}
    \nonumber \text{max}_{\underline{q}} \quad   & \mu_q = \sum_{x \in S} q_x v_x,   &\\
    \nonumber \\
    \nonumber \text{s.t. } & \\
    &\sum_{x \in S} p_x \ln \left( \dfrac{p_x}{q_x} \right) = \delta  \label{eqn:UCB-align-opt-delta-constraint},\\
    & \sum_{x \in S} q_x = 1  \label{eqn:UCB-align-opt-sum-to-one-constraint}, \\
    & q_x > 0 & x \in S \label{eqn:UCB-align-opt-greater-than-zero-constraint}.
    \end{alignat}
    
    We now turn to the main task, reducing the dimension of the optimization problem. Using Lagrange multipliers we have the following auxiliary function,
    \begin{equation*}
    \begin{split}
    L(\underline{q},\lambda,\mu) & = \sum_{x \in S} q_x v_x + \lambda \left( \sum_{x \in S} p_x \ln \left( \dfrac{p_x}{q_x} \right) - \delta \right) + \mu \left( \sum_{x \in S} q_x  - 1 \right).
    \end{split}
    \end{equation*}
    
    Note that when using the Lagrange multipliers, we can safely ignore the positivity inequality constraints in Eq. \eqref{eqn:UCB-align-opt-greater-than-zero-constraint} because they are strict inequalities, thus inactive, and removing them will not change the local optimum.
    
    Taking partial derivatives, we get,
    \begin{equation*}
    \begin{split}
    L^{\prime}_{q_x}(\underline{q},\lambda,\mu) & =  v_x - \dfrac{\lambda  p_x}{q_x} + \mu \; \; \; ,\; \; \forall x \in S, \\
    L^{\prime}_{\lambda}(\underline{q},\lambda,\mu) & =   \sum_{x \in S} p_x \ln \left( \dfrac{p_x}{q_x} \right) - \delta, \\
    L^{\prime}_{\mu}(\underline{q},\lambda,\mu) & =  \sum_{x \in S} q_x  - 1.
    \end{split}
    \end{equation*}
    
    Setting them to zero, results in the following system of equations for the optimal solution, $\underline{q}^*$,
    \begin{align}
    v_x + \mu &= \dfrac{\lambda  p_x}{q^*_x} \; \; \; ,\; \; \forall x \in S \label{eqn:zero-partial-a}, \\
    \nonumber \sum_{x \in S} p_x \ln \left( \dfrac{p_x}{q^*_x} \right) &= \delta, \\
    \nonumber \sum_{x \in S} q^*_x &= 1.
    \end{align}
    
    We are looking for a solution $\underline{q}^*$ to this system, and any such solution will be a global maximum. To see this, observe that our optimization problem is a convex optimization problem. This can be seen more easily when put in its original form, as in Eq. \eqref{eqn:UCB-originalOPT}. We are maximizing a linear (and thus concave) function, the inequality constraint is convex, and the equality constraints are affine. Thus, any stationary point will be a local maximum and any local maximum will be a global maximum. \citep{boyd2004convex}
    
    Multiplying Eq. \eqref{eqn:zero-partial-a} through by $q^*_x$, we have,
    \begin{equation} \label{eqn:UCB-multiply-through}
    \lambda p_x = q^*_x(v_x + \mu)  \; \; \; ,\; \; \forall x \in S.
    \end{equation}
    
    Summing Eq. \eqref{eqn:UCB-multiply-through} over $x$, we have
    
    \begin{equation} \label{eqn:UCB-lambda-mu-system-A}
    \lambda = \mu_q^* + \mu.
    \end{equation}
    
    We now introduce a quantity, $\sigma^2_{q^*}$, the variance under transition law $\underline{q}^*$, explicitly defined as follows
    
    \begin{equation}\label{eqn:UCB-sigma-q-definition}
    \sigma^2_{q^*} = \sum_{x \in S} q^*_x v_x^2 - \mu_{q^*}^2.
    \end{equation}
    
    Looking at Eq. \eqref{eqn:UCB-multiply-through} again, but this time, multiplying through by $v_x$ we get,
    
    \begin{equation*}
    \lambda p_x v_x = q^*_x v_x^2 + q^*_x v_x \mu  \; \; \; ,\; \; \forall x \in S.
    \end{equation*}
    
    Summing this over $x$ yields,
    \begin{equation} \label{eqn:UCB-lambda-mu-system-B}
    \mu_p \lambda = \sigma^2_{q^*} + \mu_{q^*}^2 + \mu \mu_{q^*}.
    \end{equation}
    
    Equations \eqref{eqn:UCB-lambda-mu-system-A} and \eqref{eqn:UCB-lambda-mu-system-B} form a system of equations with two unknowns $\mu$ and $\lambda$. Solving this system yields,
    \begin{align*}
    \mu & = \dfrac{\sigma_{q^*}^2 + \mu_{q^*}^2 - \mu_p\mu_{q^*}}{\mu_p - \mu_{q^*}},\\
    \lambda &=  \dfrac{\sigma_{q^*}^2}{\mu_p - \mu_{q^*}}.
    \end{align*}
    
    Substituting them into the first equation in the original system Eq. \eqref{eqn:zero-partial-a}, and recalling the relationship between $\lambda$ and $\mu$ from Eq. \eqref{eqn:UCB-lambda-mu-system-A}, we get that for each x:
    \begin{align}
    \nonumber \dfrac{p_x}{q^*_x} &= \dfrac{v_x}{\lambda} + \dfrac{\mu}{\lambda} \\
    \nonumber  &= \dfrac{v_x}{\lambda} + \dfrac{\mu}{\mu_{q^*} + \mu} \\
    \nonumber  &= \dfrac{v_x}{\lambda} + \dfrac{\mu_{q^*} + \mu - \mu_{q^*}}{\mu_{q^*} + \mu} \\
    &=1 + \dfrac{v_x - \mu_{q^*}}{\lambda} \label{eqn:UCB-ratio-sigma}.
    \end{align}
    
    We can now rewrite the optimization problem in Eq. \eqref{eqn:UCB-originalOPT} in terms of our new variables using Eq. \eqref{eqn:UCB-ratio-sigma}.
    
    The positivity constraint in Eq. \eqref{eqn:UCB-align-opt-greater-than-zero-constraint} and recalling that $p_x > 0$ for all $x \in S$, yields,
    \begin{equation*}
    \dfrac{p_x}{q^*_x} = 1 + \dfrac{v_x - \mu_{q^*} }{\lambda} > 0,
    \end{equation*}
    
    the normalization constraint in Eq. \eqref{eqn:UCB-align-opt-sum-to-one-constraint} yields,
    
    \begin{equation*}
    \sum_x \dfrac{p_x}{1 + \dfrac{v_x - \mu_{q^*}}{\lambda}} =1,
    \end{equation*}
    
    and the KL divergence constraint in Eq. \eqref{eqn:UCB-align-opt-delta-constraint} yields,
    
    \begin{equation*}
    \sum_{x \in S} p_x \ln \left( 1 + \dfrac{v_x - \mu_q^*}{\lambda} \right) = \delta.
    \end{equation*}
    
    Observe that $\mu_p$ must be strictly less than $\mu_q^*$. To see this, take $\underline{q} = \underline{p}$, then $\underline{q}$ is feasible and the left hand side of Eq. \eqref{eqn:UCB-align-original-opt-delta-constraint} is $0$ which is less than $\delta$. Lemma \ref{lem:UCB-kl-constraint-equality} implies there exists some feasible $\underline{q^\prime}$ with a strictly greater objective function, i.e. $\mu_p = \mu_q < \mu_q^\prime \leq \mu_q^*$. We also know that $\lambda < 0$ because $\sigma^2_{q^*} > 0$ by definition in Eq. \eqref{eqn:UCB-sigma-q-definition}.
    
    Thus we can rewrite the optimization problem in Eq. \eqref{eqn:UCB-originalOPT} as, follows:
    \begin{alignat}{2}
    \nonumber \text{max}_{\mu_q,\lambda} \quad   & \mu_q, &\\
    \nonumber \\
    \nonumber \text{s.t. } & \\
    \nonumber &\sum_{x \in S} p_x \ln \left( 1 + \dfrac{v_x - \mu_q}{\lambda} \right) = \delta, \\
    \nonumber & \sum_x p_x \dfrac{\lambda}{\lambda + v_x - \mu_q} =1, \\
    & 1 + \dfrac{v_x - \mu_q}{\lambda}  > 0 & \forall x \in S \label{eqn:UCB-last-constraint},\\
    \nonumber &\mu_p < \mu_q < V \text{ and }  \lambda < 0.
    \end{alignat}
    
    Having established that $\lambda$ is strictly less than zero we can simplify the last constraint, Eq. \eqref{eqn:UCB-last-constraint}, as follows. Let $V = \max_x v_x$
    
    \begin{equation*}
    \begin{split}
    1 + \dfrac{v_x - \mu_q}{\lambda}  & > 0, \; \forall x \in S \\
    \dfrac{v_x - \mu_q}{\lambda} & > -1, \; \forall x \in S \\
    v_x - \mu_q & < -\lambda, \; \forall x \in S \\
    \mu_q -v_x  & > \lambda, \; \forall x \in S \\
    \implies \mu_q -V  & >\lambda. \\
    \end{split}
    \end{equation*}
    
    Thus we have,
    
    \begin{alignat*}{2}
    \text{max}_{\mu_q,\lambda} \quad   & \mu_q, &\\
    \nonumber \\
    \nonumber \text{s.t. } & \\
    &\sum_{x \in S} p_x \ln \left( 1 + \dfrac{v_x - \mu_q}{\lambda} \right) = \delta, \\
    & \sum_x p_x \dfrac{\lambda}{\lambda + v_x - \mu_q} =1, \\
    &\mu_p < \mu_q < V \text{ and }  \lambda  < \mu_q -V .
    \end{alignat*}
    
    Which is just two equations with two unknowns. Recalling that any feasible solution will be a global maximum by our discussion of the convexity of the optimization problem, we have the desired result,
    \begin{equation*}
    C(\underline{p}, \underline{v}, \delta) = \mu_q^*,
    \end{equation*}
    Where the only unknowns are $\mu_q^*$ and $\lambda$, and they satisfy these constraints:
    \begin{align*}
    \sum_{x \in S} p_x \ln \left( 1 + \dfrac{v_x - \mu_q^*}{\lambda} \right) = \delta, \\
    \sum_x p_x \dfrac{\lambda}{\lambda + v_x - \mu_q^*} =1, \\
    \mu_p < \mu_q^* < V \text{ and }  \lambda < \mu_q^* -V .
    \end{align*}
\end{proof}

\subsection{Proof of Theorem \ref{theorem-DMED-Simplification-degenerate}}\label{sec:DMED-degenerate-Proof}
First we restate Theorem \ref{theorem-DMED-Simplification-degenerate}:

The value of $D(\underline{p}, \underline{v}, \rho)$ and by extension $D_t(a)$ can be easily found in the following cases:
\begin{itemize}
    \item If $\rho > V$ then the optimization problem, $D(\underline{p}, \underline{v}, \rho)$ is infeasible and we say $D(\underline{p}, \underline{v}, \rho) = \infty$ and $D_t(a) = -T_{x_t,a}(t)$.
    
    \item If $\rho \leq \mu_p$ then $D(\underline{p}, \underline{v}, \rho) = 0$ and we say $D_t(a) = \infty$.
    
    \item If $v_{x_1} \neq v_{x_2}$ for some  $x_1,x_2 \in S$ and $\rho = V$, then optimization problem $D(\underline{p}, \underline{v}, \rho)$ diverges to infinity and we say $D(\underline{p}, \underline{v}, \rho) = \infty$ and $D_t(a) = -T_{x_t,a}(t)$.
\end{itemize}

\begin{proof}
    For $\rho > V = \max_x v_x$, the optimization problem is infeasible because there is no feasible $\underline{q}$ that will have an average more than $V$ (i.e. $\sum_x q_x v_x \leq V$). In that case we take $D(\underline{p}, \underline{v}, \rho) = \infty$ and the corresponding DMED discrepancy index $D_t(a) = - T_{x_t,a}(t)$.
    
    For any $\rho \leq \mu_p$, i.e. less than or equal to the expected value under the current estimates, $D(\underline{p}, \underline{v}, \rho) = 0$ by simply taking $\underline{q}^* = \underline{p}$ and we take the corresponding DMED discrepancy index $D_t(a) = \infty$.
    
    If $v_{x_1} = v_{x_2}$ for all $x_1,x_2 \in S$ then $\mu_p = v_x = V$ and depending on the value of $\rho$ one of the previous two situations apply.
    
    If $v_{x_1} \neq v_{x_2}$ for some  $x_1,x_2 \in S$ and $\rho = V$ we have the following. Any feasible $\underline{q}$ such that $\sum_x q_x v_x = V$ must have $q_x = 0$ for some $x \in S$ such that $v_x < V$, in which case $\underline{q}$ falls outside of $\Theta$ - and it is in fact not feasible. We therefore take $D(\underline{p}, \underline{v}, \rho) = \infty$ and the corresponding DMED discrepancy index $D_t(a) = - T_{x_t,a}(t)$.
\end{proof}

\subsection{Proof of Theorem \ref{theorem-DMED-Simplification}}\label{sec:DMED Simplification Proof}
In this section we will prove Theorem \ref{theorem-DMED-Simplification}, which we restate here. Let $V = \max_x v_x$. Then, for any $\underline{v}$ such that $v_{x_1} \neq v_{x_2}$ for some  $x_1,x_2 \in S$ and for $\sum_{x \in S} p_x v_x < \rho < V$,
\begin{equation*}
D(\underline{p}, \underline{v}, \rho) = \sum_x p_x \ln( 1 + (\rho - v_x)\lambda),
\end{equation*}
where
\begin{align*}
\sum_x p_x \frac{ \rho - v_x }{1 + (\rho - v_x)\lambda} = 0, \\
0 < \lambda < \frac{1}{V - \rho}.
\end{align*}

Before giving the formal proof, it may be helpful to understand the overall conception of the proof. The main idea is the use of Lagrange multiplier techniques, which greatly reduces the dimensionality of the problem to be solved. We are able to exchange from trying to find the optimal probability vector $\underline{q}^*$, to a problem where we need only find two moments of the optimal $\underline{q}^*$, a dramatic dimension reduction. In the MDP-DMED case we are able to simplify even further, because the optimal unknown mean $\mu_q^*$ is given as $\rho$, and it suffices to find $\lambda = (\mu_q^* - \mu_p)/\sigma_{q^*}^2$ which is a function of the unknown optimal variance.

The proof follows along similar lines as the one for MDP-UCB in Appendix \ref{sec:UCB Simplification Proof}.

\begin{proof}
    Recall that,
    \begin{equation}\label{eqn:DMED-original-OPT}
    \begin{split}
    D(\underline{p}, \underline{v}, \rho) & = \inf_{\underline{q} \in \Theta} \left\{ \mathbf{I}(\underline{p}, \underline{q}): \sum_x q_x v_x \geq \rho \right\}.
    \end{split}
    \end{equation}
    We want to show that the infimum in EQ. \eqref{eqn:DMED-original-OPT} is realized by a minimum.
    
    Let $ 0 < \epsilon <1$ and  $x^* = \arg\max v_x$. Consider the probability vector  $\underline{q}^\prime$ defined as $q^\prime_{x^*} = 1 - \epsilon$ and $q^\prime_{x} = \epsilon / \vert S \vert$ for $x \neq x^*$. For the appropriate choice of $\epsilon$, we will have $\sum_x q^\prime_x v_x = \rho < V$ with finite valued $\mathbf{I}(\underline{p}, \underline{q}^\prime)$. Thus, $D(\underline{p}, \underline{v}, \rho) \leq \mathbf{I}( \underline{p}, \underline{q}^\prime)$ and we can restrict to only considering $\underline{q} \in \Theta$ such that $\mathbf{I}(\underline{p}, \underline{q}) \leq \mathbf{I}(\underline{p}, \underline{q}^\prime)$. This feasible set is closed and compact, and hence the infimum is realized by a minimum over this set. Since $\mathbf{I}(\underline{p}, \underline{q}^\prime)$ is diverging to infinity as $\epsilon \to 0$, this minimum must occur in the interior of the constrained feasible region. Hence the infimum \textit{without} the additional constraint on feasibility will also be realized by a minimum within the interior of the set $\{ \underline{q} \in \Theta,  \sum_x q_x v_x \geq \rho \}$. 
    
    Thus, we can rewrite the problem of computing $D(\underline{p}, \underline{v}, \rho)$ in the following form:
    
    \begin{alignat}{2}
    \nonumber \text{min}_{\underline{q}} \quad   &  \sum_{x \in S} p_x \ln \dfrac{p_x}{q_x},   & \\
    \nonumber \\
    \nonumber \text{s.t. } & \\
    &  \quad  \sum_{x \in S} q_x v_x \geq \rho, \label{eqn:DMED-align-original-opt-rho-constraint} \\
    \nonumber &  \quad \sum_{x \in S} q_x = 1, \\
    \nonumber &  \quad q_x > 0 & x \in S.
    \end{alignat}
    
    Here we can use Lemma \ref{lem:DMED-rho-constraint-equality} in Appendix \ref{sec:KL-Lemmas} to observe that for any feasible $\underline{q}$ where the constraint in Eq. \eqref{eqn:DMED-align-original-opt-rho-constraint} is strict, we can construct a feasible $\underline{q}^\prime$ with a strictly smaller objective function (KL divergence w.r.t. $\underline{p}$). As such, the optimum must occur when this constraint is satisfied with equality, and the optimization problem can be re-written as so:
    
    \begin{alignat}{2}
    \nonumber \text{min}_{\underline{q}} \quad   &  \sum_{x \in S} p_x \ln \dfrac{p_x}{q_x},   &\\
    \nonumber \\
    \nonumber \text{s.t. } & \\
    &  \quad  \sum_{x \in S} q_x v_x = \rho, \label{eqn:DMED-align-opt-rho-constraint} \\
    &  \quad \sum_{x \in S} q_x = 1, \label{eqn:DMED-align-opt-sum-to-one-constraint} \\
    &  \quad q_x > 0 & x \in S. \label{eqn:DMED-align-opt-positive-constraint}
    \end{alignat}
    
    We now turn to the main task, reducing the dimension of the optimization problem. Using Lagrange multipliers we have the following auxiliary equation,
    
    \begin{equation*}
    \begin{split}
    L(\underline{q},\lambda,\mu) & = -\sum_{x \in S} p_x \ln \dfrac{p_x}{q_x} + \lambda \left( \sum_{x \in S} q_x v_x - \rho \right) + \mu \left( \sum_{x \in S} q_x  - 1 \right).
    \end{split}
    \end{equation*}
    
    Note when using the Lagrange multipliers, we can safely ignore the positivity constraints in Eq. \eqref{eqn:DMED-align-opt-positive-constraint} because they are strict inequalities, thus inactive, and thus have a Lagrange multiplier of zero.

    Taking partial derivatives, we get,
    \begin{equation*}
    \begin{split}
    L^{\prime}_{q_x}(\underline{q},\lambda,\mu) & = \dfrac{p_x}{q_x} + \lambda v_x + \mu  \; \; \; ,\; \; \forall x \in S, \\
    L^{\prime}_{\lambda}(\underline{q},\lambda,\mu) & = \sum_{x \in S} q_x v_x - \rho, \\
    L^{\prime}_{\mu}(\underline{q},\lambda,\mu) & =  \sum_{x \in S} q_x  - 1.
    \end{split}
    \end{equation*}
    
    Setting them to zero, results in the following system of equations for the optimal solution, $\underline{q}^*$,
    \begin{align}
    -\dfrac{p_x}{q^*_x} &= \lambda v_x + \mu \; \; \; ,\; \; \forall x \in S \label{eqn:DMED-zero-partial-a}, \\
    \nonumber \sum_{x \in S} q^*_x v_x & = \rho, \\
    \nonumber \sum_{x \in S} q^*_x &= 1.
    \end{align}
    
    We are looking for a solution $\underline{q}^*$ to this system, and any such solution will be a global minimum. To see this, observe that our optimization problem is a convex optimization problem. We are minimizing a convex function, with affine equality constraints. Thus, any stationary point will be a local minimum, and any local minimum will be a global minimum. \citep{boyd2004convex}
    
    Consider the first equation: multiply through by $q^*_x$ to get $-p_x = \lambda v_x q^*_x + \mu q^*_x$. Summing this over $x$ and simplifying accordingly, we get $-1 = \lambda \rho + \mu$.
    
    If we take $-p_x = \lambda v_x q^*_x + \mu q^*_x$ and multiply through by $v_x$, we get $- v_x p_x = \lambda v^2_x q^*_x + \mu v_x q^*_x$. We now introduce two new quantities, $\rho_p$, the mean under transition law $\underline{p}$, and $\sigma^2_{q^*}$, the variance under transition law $\underline{q}^*$, explicitly defined as follows
    \begin{equation*}
    \begin{split}
    \rho_p & = \sum_x p_x v_x,
    \end{split}
    \end{equation*}
    
    \begin{equation} \label{eqn:sigma-q-definition}
    \sigma^2_{q^*} = \sum_x v^2_x q^*_x - \rho^2. \\
    \end{equation}
    
    Summing $- v_x p_x = \lambda v^2_x q^*_x + \mu v_x q^*_x$ over $x$ and simplifying accordingly, we get $-\rho_p = \lambda ( \sigma^2_{q^*} + \rho^2) + \mu \rho$. So we have two equations and two unknowns,
    \begin{equation*}
    \begin{split}
    -1 & = \lambda \rho + \mu, \\
    -\rho_p &= \lambda ( \sigma^2_{q^*} + \rho^2) + \mu \rho.
    \end{split}
    \end{equation*}
    
    Solving these for $\lambda$ and $\mu$  we have,
    \begin{equation} \label{eqn:DMED-lambda-mu-solution}
    \begin{split}
    \lambda & = \dfrac{\rho -\rho_p}{\sigma^2_{q^*}}, \\
    \mu & = -1 - \dfrac{\rho -\rho_p}{\sigma^2_{q^*}} \rho.
    \end{split}
    \end{equation}
    
    Substituting them into the first equation in the original system Eq. \eqref{eqn:DMED-zero-partial-a}, and noting that Eq. \eqref{eqn:DMED-lambda-mu-solution} implies $\mu = -1 - \lambda \rho$, we get that for each x:
    \begin{align}
    \nonumber \dfrac{p_x}{q^*_x} &= -\lambda v_x - \mu\\
    \nonumber  &= -\lambda v_x + 1 +\lambda \rho \\
    &= 1 + (\rho - v_x)\lambda  \label{eqn:DMED-ratio-sigma}.
    \end{align}
    
    In order to reduce the original problem to a 1-dimensional problem, we now express each of the constraints in terms of our new variables using Eq. \eqref{eqn:DMED-ratio-sigma}. The positivity constraint in Eq. \eqref{eqn:DMED-align-opt-positive-constraint} and recalling that $p_x > 0$ for all $x \in S$, yields,
    \begin{equation*}
    \frac{p_x}{q^*_x} = 1 + (\rho - v_x)\lambda > 0,
    \end{equation*}
    
    the normalization constraint in Eq. \eqref{eqn:DMED-align-opt-sum-to-one-constraint} yields,
    \begin{equation*}
    \sum_{x} \frac{p_x}{1 + (\rho - v_x)\lambda} = 1,
    \end{equation*}
    
    and the mean constraint in Eq. \eqref{eqn:DMED-align-opt-rho-constraint} yields,
    \begin{equation*}
    \sum_{x \in S} \frac{p_x}{1 + (\rho - v_x)\lambda} v_x = \rho.
    \end{equation*}
    
    Therefore, we can express the problem in Eq. \eqref{eqn:DMED-original-OPT}, noting Eq. \eqref{eqn:DMED-ratio-sigma} above for the $p_x / q^*_x$ term, as follows:
    
    \begin{alignat}{2}
    \nonumber \text{min}_{\lambda} \quad   &  \sum_{x} p_x \ln\left( 1 +  \lambda ( \rho - v_x) \right),   & \\
    \nonumber \\
    \nonumber \text{s.t. } & \\
    \nonumber  &\quad  \sum_{x} \frac{p_x}{ 1 + ( \rho - v_x) \lambda } = 1, \\
    \nonumber &\quad \sum_{x \in S} \frac{p_x}{1 + (\rho - v_x)\lambda} v_x = \rho, \\
    &  \quad 1 +  \lambda ( \rho - v_x) >  0 & \forall x \in S. \label{eqn:DMED-last-constraint}
    \end{alignat}
    
    We next establish feasible bounds for $\lambda$. Observe that the variance, $\sigma^2_{q^*}$ is strictly greater than 0 by definition in Eq. \eqref{eqn:sigma-q-definition} and by recalling that there exists some $x_1,x_2 \in S$ such that $v_{x_1} \neq v_{x_2}$. We also know that $\rho > \rho_p = \sum_x p_xv_x$ by assumption. Thus, $\lambda > 0$.
    
    Having established that $\lambda$ is strictly greater than zero we can simplify the last constraint, Eq. \eqref{eqn:DMED-last-constraint}, as follows. Let $V = \max_x v_x$,
    
    \begin{equation*}
    \begin{split}
    1 +  \lambda ( \rho - v_x) & > 0, \; \forall x \in S \\
    \implies 1 +  \lambda ( \rho - V) & > 0 \\
    1 +  \lambda\rho - \lambda V & > 0 \\
    1 +  \lambda\rho & > \lambda V \\
    1 & > \lambda (V - \rho) \\
    \dfrac{1}{(V - \rho)} & > \lambda.  \\
    \end{split}
    \end{equation*}
    Where the last step is justified by recalling that by assumption $V$  is strictly greater than $\rho$.
    
    So, $0 < \lambda < \dfrac{1}{(V - \rho)}$ and our optimization problem becomes,
    \begin{alignat}{2}
    \nonumber \text{min}_{\lambda} \quad   &  \sum_{x} p_x \ln\left( 1 +  \lambda ( \rho - v_x) \right),   & \\
    \nonumber \\
    \nonumber \text{s.t. } & \\
    &  \quad  \sum_{x} \frac{p_x}{ 1 + ( \rho - v_x) \lambda } = 1, \label{eqn:DMED-normalization-constraint} \\
   \nonumber & \quad \sum_{x \in S} \frac{p_x}{1 + (\rho - v_x)\lambda} v_x = \rho, \\
    \nonumber & \quad 0 < \lambda < \dfrac{1}{(V - \rho)}.
    \end{alignat}
    
    Taking a closer look at the normalization constraint, Eq. \eqref{eqn:DMED-normalization-constraint},
    \begin{equation*}
    \begin{split}
    0 &= \sum_{x} \frac{p_x}{ 1 +  \lambda ( \rho - v_x) } - 1 \\
    & = \sum_{x} p_x \left( \frac{1}{ 1 +  \lambda ( \rho - v_x) } - 1 \right) \\
    & = \sum_{x} p_x \left( \frac{1}{ 1 +  \lambda ( \rho - v_x) } - \dfrac{1 +  \lambda ( \rho - v_x)}{1 +  \lambda( \rho - v_x)} \right) \\
    & = \sum_{x} p_x \left( \frac{1 - 1 -  \lambda ( \rho - v_x)}{ 1 +  \lambda ( \rho - v_x) } \right) \\
    & = -\lambda  \sum_{x} p_x \left( \frac{( \rho - v_x)}{ 1 +  \lambda ( \rho - v_x) } \right).
    \end{split}
    \end{equation*}
    
    However, recalling that $\lambda$ is strictly positive, it must be that $\sum_{x} p_x \left( \frac{( \rho - v_x))}{ 1 +  \lambda ( \rho - v_x) } \right) = 0$. Hence we have:
    \begin{alignat}{2}
    \nonumber \text{min}_{\lambda} \quad   &  \sum_{x} p_x \ln\left( 1 +  \lambda ( \rho - v_x) \right),   & \\
    \nonumber \\
    \nonumber \text{s.t. } & \\
    &  \quad  \sum_{x} p_x \left( \frac{( \rho - v_x))}{ 1 +  \lambda ( \rho - v_x) } \right) = 0, \label{eqn:DMED-opt-0} \\
    & \quad \sum_{x \in S} \frac{p_x}{1 + (\rho - v_x)\lambda} v_x = \rho, \label{eqn:DMED-opt-rho} \\
    \nonumber & \quad 0 < \lambda < \dfrac{1}{(V - \rho)}.
    \end{alignat}
    
    Next we show that any $\lambda$ that satisfies Eq. \eqref{eqn:DMED-opt-0} will also satisfy Eq. \eqref{eqn:DMED-opt-rho} and thus we can remove that constraint,
    
    \begin{equation*}
    \begin{split}
    0 & = \sum_{x} p_x \left( \frac{( \rho - v_x))}{ 1 +  \lambda ( \rho - v_x) } \right) \\
    & =  \sum_{x} \dfrac{-p_x v_x}{{ 1 +  \lambda ( \rho - v_x) }}  + \sum_{x} \dfrac{p_x \rho}{{ 1 +  \lambda ( \rho - v_x) }} \\
    & = \sum_{x} \dfrac{-p_x v_x}{{ 1 +  \lambda ( \rho - v_x) }}  + \rho \sum_{x} \dfrac{p_x }{{ 1 +  \lambda ( \rho - v_x) }} \\
    & = \sum_{x} \dfrac{-p_x v_x}{{ 1 +  \lambda ( \rho - v_x) }}  + \rho \cdot 1.\\
    \end{split}
    \end{equation*}
    Where the last line is justified by recalling Eq. \eqref{eqn:DMED-normalization-constraint}. Thus we have established that,
    \begin{equation*}
    \begin{split}
    \sum_{x} \dfrac{-p_x v_x}{{ 1 +  \lambda ( \rho - v_x) }} = -\rho
    \implies  \sum_{x} \dfrac{p_x v_x}{{ 1 +  \lambda ( \rho - v_x) }} = \rho,
    \end{split}
    \end{equation*}
    which is Eq. \eqref{eqn:DMED-opt-rho}.
    
    Thus we can write the optimization problem as,
    \begin{alignat}{2}
    \nonumber \text{min}_{\lambda} \quad   &  \sum_{x} p_x \ln\left( 1 +  \lambda ( \rho - v_x) \right),   & \\
    \nonumber \\
    \nonumber \text{s.t. } & \\
    &  \quad  \sum_{x} p_x \left( \frac{( \rho - v_x))}{ 1 +  \lambda ( \rho - v_x) } \right) = 0, \label{eqn:DMED-first-constraint}\\
    \nonumber &  \quad 0 <  \lambda < \frac{1}{ V - \rho }.
    \end{alignat}
    
    Recall that any feasible solution will be a global minimum, by our discussion of the convexity of the optimization problem. To find a feasible solution, notice that the derivative of the objective function with respect to $\lambda$ is simply the first constraint, Eq. \eqref{eqn:DMED-first-constraint}. Therefore any stationary point of the objective function will satisfy the constraint, be feasible, and thus be a global minimum. Hence, we may replace the original optimization problem with the problem of solving,
    \begin{equation*}
    \sum_{x} p_x \left( \frac{( \rho - v_x))}{ 1 +  \lambda ( \rho - v_x) } \right) = 0,
    \end{equation*}
    subject to $ 0 < \lambda < \frac{1}{ V - \rho }$.
    
    Thus we have the desired result,
    \begin{equation*}
    D(\underline{p}, \underline{v}, \rho) = \sum_x p_x \ln( 1 + (\rho - v_x)\lambda),
    \end{equation*}
    Where the only unknown is $\lambda$, and it satisfies these constraints:
    \begin{align*}
    \sum_x p_x \frac{ \rho - v_x }{1 + (\rho - v_x)\lambda} = 0, \\
    0 < \lambda < \frac{1}{V - \rho}.
    \end{align*}
\end{proof}

\section{KL Divergence Optimization Lemmas} \label{sec:KL-Lemmas}
The purpose of this section is to state and prove a number of lemmas associated with convex optimization problems involving KL-Divergence terms. They are relevant, but tangential to most of the content of the paper.

In this section, we take $\underline{p} \in \Theta$ to be a distribution over $S$, with $\underline{v}$ to be the vector of intermediate state values. It is convenient to define $\mu_p = \sum_{x} p_x v_x$ and $V = \max_x v_x$. The vector $\underline{q}$ is taken to be another distribution over $S$, with possibly zero-valued elements. The KL Divergence between $\underline{p}$ and $\underline{q}$ is given by
\begin{equation*}
\textbf{I}(\underline{p}, \underline{q}) = \sum_x p_x \ln \frac{ p_x }{ q_x }.
\end{equation*}

\begin{lemma}\label{lem:UCB-kl-constraint-equality}
    Let $\underline{q} \in \Theta$ be such that $\textbf{I}(\underline{p}, \underline{q}) < \delta < \infty$, and suppose $v_{x_1} > v_{x_2}$ for some $x_1,x_2 \in S$. Then there is a valid probability distribution $\underline{q}^\prime$ such that $\textbf{I}(\underline{p}, \underline{q}^\prime) \leq \delta$, and
    \begin{equation*}
    \sum_{x \in S} q_x v_x  < \sum_{x \in S} q^\prime_x v_x.
    \end{equation*}
\end{lemma}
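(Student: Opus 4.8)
The plan is to produce $\underline{q}'$ explicitly as a small mass-transfer perturbation of $\underline{q}$: move a little probability from the low-value coordinate $x_2$ to the high-value coordinate $x_1$. Concretely, fix $x_1,x_2 \in S$ with $v_{x_1} > v_{x_2}$, and for $\epsilon > 0$ define $\underline{q}'_\epsilon$ by $(q'_\epsilon)_{x_1} = q_{x_1} + \epsilon$, $(q'_\epsilon)_{x_2} = q_{x_2} - \epsilon$, and $(q'_\epsilon)_x = q_x$ otherwise. Since $\underline{q} \in \Theta$, every coordinate of $\underline{q}$ is strictly positive, and $x_1 \neq x_2$; hence for all sufficiently small $\epsilon$ (in particular $\epsilon < q_{x_2}$) the vector $\underline{q}'_\epsilon$ still has strictly positive coordinates, and they still sum to $1$ because the perturbation is mass-preserving. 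So $\underline{q}'_\epsilon \in \Theta$ for small $\epsilon$.

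Then I would verify the two required properties. For the objective, $\sum_x (q'_\epsilon)_x v_x - \sum_x q_x v_x = \epsilon(v_{x_1} - v_{x_2}) > 0$ for every $\epsilon > 0$, so the mean strictly increases; this costs nothing. For the divergence, write $\mathbf{I}(\underline{p}, \underline{q}'_\epsilon) = \sum_x p_x \ln(p_x/(q'_\epsilon)_x)$; only the $x_1$ and $x_2$ summands depend on $\epsilon$, and since $q_{x_1},q_{x_2} > 0$ these are continuous in $\epsilon$ near $0$. Thus $\mathbf{I}(\underline{p}, \underline{q}'_\epsilon) \to \mathbf{I}(\underline{p}, \underline{q})$ as $\epsilon \downarrow 0$. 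Because $\mathbf{I}(\underline{p}, \underline{q}) < \delta$ by hypothesis, there is $\epsilon_0 > 0$ small enough that simultaneously $\underline{q}'_{\epsilon_0} \in \Theta$ and $\mathbf{I}(\underline{p}, \underline{q}'_{\epsilon_0}) \le \delta$ (in fact $< \delta$); taking $\underline{q}' = \underline{q}'_{\epsilon_0}$ completes the proof. Equivalently, one can make the divergence step quantitative: $\frac{d}{d\epsilon}\mathbf{I}(\underline{p}, \underline{q}'_\epsilon)\big|_{\epsilon=0} = -p_{x_1}/q_{x_1} + p_{x_2}/q_{x_2}$ is finite, so a first-order estimate gives $\mathbf{I}(\underline{p}, \underline{q}'_\epsilon) = \mathbf{I}(\underline{p}, \underline{q}) + O(\epsilon)$, and the slack $\delta - \mathbf{I}(\underline{p}, \underline{q}) > 0$ absorbs the $O(\epsilon)$ term.

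There is no real obstacle here; the content is bookkeeping. The two points needing care are (i) checking that $\underline{q}'_\epsilon$ stays in $\Theta$ — this is exactly where the strict positivity of the coordinates of $\underline{q}$, guaranteed by $\underline{q} \in \Theta$, is used — and (ii) justifying continuity of $\mathbf{I}(\underline{p}, \cdot)$ along the perturbation, which again hinges on $q_{x_1}, q_{x_2}$ being bounded away from $0$ so the relevant logarithms are smooth (the hypothesis $\delta < \infty$, equivalently $\mathbf{I}(\underline{p},\underline{q}) < \infty$, is consistent with this but not essential to the argument).
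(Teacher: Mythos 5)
Your proposal is correct and follows essentially the same route as the paper's own proof: the identical mass-transfer perturbation from $x_2$ to $x_1$, the same computation showing the mean increases by $\epsilon(v_{x_1}-v_{x_2})>0$, and the same continuity/smoothness argument that the KL divergence stays below $\delta$ for sufficiently small $\epsilon$ because it equals $\mathbf{I}(\underline{p},\underline{q})<\delta$ at $\epsilon=0$. No gaps.
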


\begin{proof}
    Consider constructing an alternative $\underline{q}^\prime \in \Theta$ in the following way. Define $q^\prime_{x_1} = q_{x_1} + \Delta$, $q^\prime_{x_2} = q_{x_2} - \Delta$, and $q^\prime_x = q_x$ for $x \neq x_1,x_2$. Note that for $0 \leq \Delta < \min(q_{x_1}, q_{x_2})$, $\underline{q}^\prime$ will be a valid probability distribution vector over $S$.
    
    We have that for $\Delta > 0$,
    \begin{equation*}
    \begin{split}
    \sum_{x} q^\prime_{x} v_{x} - \sum_{x} q_{x} v_{x} & = (q_{x_1} + \Delta) v_{x_1} + (q_{x_2} - \Delta) v_{x_2} - q_{x_1} v_{x_1} - q_{x_2} v_{x_2} \\
    & = \Delta ( v_{x_1} - v_{x_2} ) \\
    & > 0.
    \end{split}
    \end{equation*}
    
    It remains to show that the KL Divergence $\textbf{I}(\underline{p}, \underline{q}^\prime)$ does not exceed $\delta$. Note the following relations,
    \begin{equation*}
    \begin{split}
    \textbf{I}(\underline{p}, \underline{q}^\prime) & = \sum_{x} p_x \ln \frac{ p_x }{ q^\prime_x } \\
    & = \sum_{x \neq x_1, x_2} p_x \ln \frac{ p_x }{ q_x } + p_{x_1} \ln \frac{ p_{x_1} }{ q_{x_1} + \Delta } + p_{x_2} \ln \frac{ p_{x_2} }{ q_{x_2} - \Delta } \\
    & = \sum_{x} p_x \ln \frac{ p_x }{ q_x } + p_{x_1} \ln \frac{ p_{x_1} }{ q_{x_1} + \Delta } - p_{x_1} \ln \frac{ p_{x_1} }{ q_{x_1} } + p_{x_2} \ln \frac{ p_{x_2} }{ q_{x_2} - \Delta } - p_{x_2} \ln \frac{ p_{x_2} }{ q_{x_2} - \Delta } \\
    & = \textbf{I}( \underline{p}, \underline{q} ) + p_{x_1} \ln \frac{ q_{x_1} }{ q_{x_1} + \Delta } + p_{x_2} \ln \frac{ q_{x_2} }{ q_{x_2} - \Delta }.
    \end{split}
    \end{equation*}
    So, if $\Delta = 0$ then $\textbf{I}( \underline{p}, \underline{q}^\prime) = \textbf{I}( \underline{p}, \underline{q}) < \delta$. Noting that additional terms in the last equation above are smooth functions of $\Delta$, $\textbf{I}(\underline{p}, \underline{q}^\prime)$ will not exceed $\delta$ in a neighborhood of $\Delta = 0$. Thus for sufficiently small $\Delta > 0$, the Lemma holds. 
\end{proof}

\begin{lemma}\label{lem:DMED-rho-constraint-equality}
    For any $\underline{q}$ such that
    \begin{equation}
    \sum_{x \in S} q_x v_x > \rho \geq \sum_{x \in S} p_x v_x,
    \end{equation}
    if $v_{x_1} \neq v_{x_2}$ for some $x_1,x_2 \in S$, there exist distributions $\underline{q}^\prime$ such that $\textbf{I}(\underline{p}, \underline{q}^\prime) \leq \textbf{I}(\underline{p}, \underline{q})$ and
    \begin{equation*}
    \sum_{x \in S} q_x v_x > \sum_{x \in S} q^\prime_x v_x \geq \rho.
    \end{equation*}
\end{lemma}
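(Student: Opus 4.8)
The plan is to prove this by deforming $\underline{q}$ along the straight segment toward $\underline{p}$ and exploiting convexity of the KL divergence in its second argument. First I would dispose of the degenerate possibility: if $v_{x_1} = v_{x_2}$ for all $x_1,x_2 \in S$, then $\sum_x q_x v_x = \sum_x p_x v_x$ for every distribution, so the hypothesis $\sum_x q_x v_x > \rho \geq \sum_x p_x v_x$ cannot hold and there is nothing to prove. Hence we may assume the $v_x$ are not all equal, and in particular the hypothesis is non-vacuous and forces the strict inequality $\sum_x q_x v_x > \sum_x p_x v_x$.

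Next I would introduce the interpolant $\underline{q}(t) = (1-t)\underline{q} + t\underline{p}$ for $t \in [0,1]$. Since $\underline{p} \in \Theta$ has strictly positive entries, $\underline{q}(t) \in \Theta$ for every $t \in (0,1]$, so each such $\underline{q}(t)$ is a legitimate candidate. The map $t \mapsto \sum_x q(t)_x v_x = (1-t)\sum_x q_x v_x + t\sum_x p_x v_x$ is affine and, because $\sum_x q_x v_x > \sum_x p_x v_x$, strictly decreasing; it equals $\sum_x q_x v_x > \rho$ at $t=0$ and $\sum_x p_x v_x \leq \rho$ at $t=1$. By the intermediate value theorem there is $t^* \in (0,1]$ with $\sum_x q(t^*)_x v_x = \rho$, and I would set $\underline{q}' = \underline{q}(t^*)$. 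Then $\underline{q}' \in \Theta$ is a distribution with $\sum_x q'_x v_x = \rho$, so $\sum_x q_x v_x > \rho = \sum_x q'_x v_x \geq \rho$, which is the expected-value half of the claim.

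It remains to check $\textbf{I}(\underline{p}, \underline{q}') \leq \textbf{I}(\underline{p}, \underline{q})$, which is the only place any real content lies. The map $\underline{q} \mapsto \textbf{I}(\underline{p}, \underline{q}) = \sum_x p_x \ln(p_x/q_x)$ is convex, being a nonnegative combination of the convex functions $q_x \mapsto -\ln q_x$; applying this to the convex combination $\underline{q}(t^*) = (1-t^*)\underline{q} + t^*\underline{p}$ gives
$$\textbf{I}(\underline{p}, \underline{q}') \leq (1-t^*)\,\textbf{I}(\underline{p}, \underline{q}) + t^*\,\textbf{I}(\underline{p}, \underline{p}) = (1-t^*)\,\textbf{I}(\underline{p}, \underline{q}) \leq \textbf{I}(\underline{p}, \underline{q}),$$
using $\textbf{I}(\underline{p},\underline{p}) = 0$ and $t^* \in (0,1]$. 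This argument simultaneously covers the case $\textbf{I}(\underline{p},\underline{q}) = \infty$ (when $\underline{q}$ has a coordinate that is zero against positive $p$-mass): the bound is then vacuous, while $\textbf{I}(\underline{p},\underline{q}') < \infty$ anyway since $\underline{q}' \in \Theta$. Combining the two halves yields the lemma. I do not expect a genuine obstacle; the one point that needs care is that the deformation must be taken toward $\underline{p}$ specifically, so that convexity of $\textbf{I}(\underline{p},\cdot)$ together with $\textbf{I}(\underline{p},\underline{p}) = 0$ can be invoked — a local two-coordinate perturbation in the style of Lemma \ref{lem:UCB-kl-constraint-equality} would not obviously keep the divergence from increasing.
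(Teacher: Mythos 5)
Your proof is correct, but it follows a genuinely different route from the paper's. The paper argues locally: it first locates a coordinate pair $x_1,x_2$ with $v_{x_1}>v_{x_2}$, $q_{x_1}>p_{x_1}$, $q_{x_2}<p_{x_2}$ (an extremal argument it only sketches), shifts mass $\Delta$ between them to lower the mean, and shows $\frac{d}{d\Delta}\textbf{I}(\underline{p},\underline{q}^\prime)\vert_{\Delta=0}=p_{x_1}/q_{x_1}-p_{x_2}/q_{x_2}<0$, so the divergence strictly decreases for small $\Delta>0$; this mirrors the two-coordinate perturbation used for Lemma \ref{lem:UCB-kl-constraint-equality}. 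You instead interpolate globally along the segment $(1-t)\underline{q}+t\underline{p}$, pick $t^*$ by the intermediate value theorem so the mean lands exactly on $\rho$, and invoke convexity of $\textbf{I}(\underline{p},\cdot)$ together with $\textbf{I}(\underline{p},\underline{p})=0$. Your version buys several things: it needs no search for a suitable coordinate pair, it handles $\textbf{I}(\underline{p},\underline{q})=\infty$ automatically, it produces a $\underline{q}^\prime$ that meets the mean constraint with equality in one step (which is what Theorem \ref{theorem-DMED-Simplification} actually uses the lemma for), and the bound $\textbf{I}(\underline{p},\underline{q}^\prime)\leq(1-t^*)\textbf{I}(\underline{p},\underline{q})$ even gives strict decrease whenever $\underline{q}\neq\underline{p}$, matching the strictness the paper's derivative computation delivers. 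Your closing remark is also apt: the interpolation trick is specific to this lemma, since in Lemma \ref{lem:UCB-kl-constraint-equality} one must move the mean \emph{away} from $\mu_p$, so deforming toward $\underline{p}$ is unavailable and the local perturbation is genuinely needed there.
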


\begin{proof}
    As a consequence of our assumption that $\sum_x q_x v_x > \sum_x p_x v_x$, there must be some $v_{x_1} \neq v_{x_2}$ such that $\underline{q}$ puts more weight on the larger and $\underline{p}$ puts more weight on the smaller. Let $v_{x_1} > v_{x_2}$, with $q_{x_1} > p_{x_1}$ and $q_{x_2} < p_{x_2}$.
    
    Consider constructing an alternative distribution $\underline{q}^\prime \in \Theta$ in the following way. For $0 \leq \Delta < q_{x_1}$, define $\underline{q}^\prime$ by $q^\prime_{x_1} = q_{x_1} - \Delta$, $q^\prime_{x_2} = q_{x_2} + \Delta$, and $q^\prime_x = q_x$ for $x \neq x_1,x_2$. As before, for $\Delta$ in this range, $\underline{q}^\prime \in \Theta$ represents a valid probability distribution on $S$.
    
    As in the proof of Lemma \ref{lem:UCB-kl-constraint-equality}, we have that for $\Delta > 0$,
    \begin{equation*}
    \begin{split}
    \sum_{x} q^\prime_{x} v_{x} - \sum_{x} q_{x} v_{x} & = (q_{x_1} + \Delta) v_{x_1} + (q_{x_2} - \Delta) v_{x_2} - q_{x_1} v_{x_1} - q_{x_2} v_{x_2} \\
    & = \Delta ( v_{x_1} - v_{x_2} ) \\
    & > 0.
    \end{split}
    \end{equation*}
    Taking $\Delta$ sufficiently small (so that the mean does not drop below $\rho$), we have that
    \begin{equation*}
    \sum_{x \in S} q_x v_x > \sum_{x \in S} q^\prime_x v_x \geq \rho.
    \end{equation*}
    
    It remains to show that $\textbf{I}(\underline{p}, \underline{q}^\prime) \leq \textbf{I}(\underline{p}, \underline{q})$. Similar to the proof of Lemma \ref{lem:UCB-kl-constraint-equality}, we have that
    \begin{equation*}
    \begin{split}
    \textbf{I}(\underline{p}, \underline{q}^\prime) & = \textbf{I}( \underline{p}, \underline{q} ) + p_{x_1} \ln \frac{ q_{x_1} }{ q_{x_1} - \Delta } + p_{x_2} \ln \frac{ q_{x_2} }{ q_{x_2} + \Delta }.
    \end{split}
    \end{equation*}
    Hence we see that $\textbf{I}(\underline{p}, \underline{q}^\prime)  = \textbf{I}( \underline{p}, \underline{q} )$ when $\Delta = 0$. Looking at the derivative of $\textbf{I}(\underline{p}, \underline{q}^\prime)$ with respect to $\Delta$ at $\Delta = 0$, we see
    \begin{equation*}
    \frac{d}{d\Delta} \textbf{I}( \underline{p}, \underline{q}^\prime) \vert_{\Delta = 0} = \frac{p_{x_1}}{q_{x_1}} - \frac{p_{x_2}}{q_{x_2}} < 0,
    \end{equation*}
    
    where the last step follows since $p_{x_1} / q_{x_1} < 1$ and $p_{x_2} / q_{x_2} > 1$, as discussed initially. Hence while the KL divergences are equal for $\Delta = 0$, $\textbf{I}(\underline{p}, \underline{q}^\prime)$ is decreasing within some small neighborhood, and the KL divergence between $\underline{p}$ and $\underline{q}^\prime$ is reduced.
\end{proof}


\begin{thebibliography}{39}
    \providecommand{\natexlab}[1]{#1}
    
    \bibitem[Abadi et~al.(2016)Abadi, Barham, Chen, Chen, Davis, Dean, Devin,
    Ghemawat, Irving, Isard, Kudlur, Levenberg, Monga, Moore, Murray, Steiner,
    Tucker, Vasudevan, Warden, Wicke, Yu, and Zheng]{abadi2016tensorflow}
    Martin Abadi et~al.
    \newblock Tensorflow: A system for large-scale machine learning.
    \newblock In \emph{12th USENIX Symposium on Operating Systems Design and
        Implementation (OSDI 16)}, pages 265--283, 2016.
    

     \bibitem[Agrawal et~al.(1988a)]{Agrawal88MDPs}
    Agrawal, Rajeev, Demosthenis Teneketzis, and Venkatachalam Anantharam.
   \newblock  Asymptotically efficient adaptive allocation schemes for controlled Markov chains: Finite parameter space."    \newblock \emph{Proceedings of the 27th IEEE Conference on Decision and Control,} IEEE, pages   1198-1203, 1988.
    
          \bibitem[Agrawal et~al.(1988b)]{Agrawal88Mabs}
    Agrawal, Rajeev, M. V. Hedge, and Demosthenis Teneketzis. 
    \newblock Asymptotically efficient adaptive allocation rules for the multiarmed bandit problem with switching cost.    \newblock \emph{IEEE Transactions on Automatic Control}, 33(10),  899-906, 1988.
    
    
    \bibitem[Auer and Ortner(2007)]{auer2007logarithmic}
    Peter Auer and Ronald Ortner.
    \newblock Logarithmic online regret bounds for undiscounted reinforcement
    learning.
    \newblock In \emph{Advances in Neural Information Processing Systems 19}, pages 49--56.
    MIT Press, 2007.
    
    \bibitem[Auer and Ortner(2010)]{auer2010ucb}
    Peter Auer and Ronald Ortner.
    \newblock Ucb revisited: Improved regret bounds for the stochastic multi-armed
    bandit problem.
    \newblock \emph{Periodica Mathematica Hungarica}, 61(1-2),   55--65, 2010.
    
    \bibitem[Auer et~al.(2002)]{auer2002finite}
    Peter Auer, Nicol{\`o} Cesa-Bianchi, and Paul Fischer.
    \newblock Finite-time analysis of the multiarmed bandit problem.
    \newblock \emph{Machine Learning}, 47(2),  235--256, May
    2002.
    
    \bibitem[Bertsekas(2019)]{bertsekas2019reinforcement}
    Dimitri Bertsekas.
    \newblock \emph{Reinforcement Learning and Optimal Control}.
    \newblock Athena Scientific, 2019.
    
 \bibitem[Borkar and  Varaiya(1982)]{borkarV82}
Vivek    Borkar  and Pravin Varaiya. 
\newblock Identification and adaptive control of Markov chains.
\newblock \emph{SIAM Journal on Control and Optimization},  20(4), 470-489, 1982.
    
    
    \bibitem[Boyd and Vandenberghe(2004)]{boyd2004convex}
    Stephen Boyd and Lieven Vandenberghe.
    \newblock \emph{Convex Optimization}.
    \newblock Cambridge university press, 2004.
    
    \bibitem[Burnetas and Katehakis(1996)]{burnetas1996optimal}
    A.~N. Burnetas and M.~N. Katehakis.
    \newblock Optimal adaptive policies for sequential allocation problems.
    \newblock \emph{Advances in Applied Mathematics}, 17,  122--142, 1996.
    
    \bibitem[Burnetas and Katehakis(1997)]{burnetas1997optimal}
    A.~N. Burnetas and M.~N. Katehakis.
    \newblock Optimal adaptive policies for {M}arkov decision processes.
    \newblock \emph{Mathematics of Operations Research}, 22(1) 
    222--255, 1997.
    
    \bibitem[Cesa-Bianchi and Lugosi(2006)]{cesa-bianchi2006prediction}
    Nicolo Cesa-Bianchi and Gabor Lugosi.
    \newblock \emph{Prediction, Learning, and Games}.
    \newblock Cambridge University Press, 2006.
    
    \bibitem[Cowan et~al.(2017)Cowan, Honda, and Katehakis]{cowan2017normal}
    Wesley Cowan, Junya Honda, and Michael~N. Katehakis.
    \newblock Normal bandits of unknown means and variances.
    \newblock \emph{The Journal of Machine Learning Research}, 18
    (1)  5638--5665, 2017.
    
    \bibitem[Cowan et~al.(2019)Cowan, Katehakis, and
    Pirutinsky]{cowan2019reinforcement}
    Wesley Cowan, Michael~N. Katehakis, and Daniel Pirutinsky.
    \newblock Reinforcement learning: a comparison of ucb versus alternative
    adaptive policies, 2019.
    \newblock \emph{arXiv preprint arXiv:1909.06019}, 2019
    
    \bibitem[Dann et~al.(2017)Dann, Lattimore, and Brunskill]{dann2017unifying}
    Christoph Dann, Tor Lattimore, and Emma Brunskill.
    \newblock Unifying pac and regret: Uniform pac bounds for episodic
    reinforcement learning.
    \newblock In \emph{Advances in Neural Information Processing Systems}, pages
    5713-5723, 2017.
    
    \bibitem[Derman(1970)]{derman1970finite}
    Cyrus Derman.
    \newblock \emph{Finite State {M}arkovian Decision Processes}, volume~19.
    \newblock Academic Press, Inc., Orlando, FL, USA, 1970.
    
    \bibitem[Doshi-Velez and Konidaris(2016)]{doshi-velez2016hidden}
    Finale Doshi-Velez and George Konidaris.
    \newblock Hidden parameter {M}arkov decision processes: A semiparametric
    regression approach for discovering latent task parametrizations.
    \newblock In \emph{IJCAI: proceedings of the conference}, volume 2016, page
    1432. NIH Public Access, 2016.
    
    \bibitem[Efroni et~al.(2019)Efroni, Merlis, Ghavamzadeh, and
    Mannor]{efroni2019tight}
    Yonathan Efroni, Nadav Merlis, Mohammad Ghavamzadeh, and Shie Mannor.
    \newblock Tight regret bounds for model-based reinforcement learning with
    greedy policies.
    \newblock \emph{arXiv preprint arXiv:1905.11527}, 2019
    
    \bibitem[Feinberg et~al.(2016)Feinberg, Kasyanov, and
    Zgurovsky]{feinberg2016partially}
    Eugene~A. Feinberg, Pavlo~O. Kasyanov, and Michael~Z. Zgurovsky.
    \newblock Partially observable total-cost {M}arkov decision processes with weakly
    continuous transition probabilities.
    \newblock \emph{Mathematics of Operations Research}, 41 (2) 
    656--681, 2016.
    
    \bibitem[Gittins et~al.(2011)Gittins, Glazebrook, and Weber]{gittins2011multi}
    John Gittins, Kevin Glazebrook, and Richard Weber.
    \newblock \emph{Multi-Armed Bandit Allocation Indices}.
    \newblock John Wiley {\&} Sons, Ltd, March 2011.
    
    \bibitem[Gittins(1979)]{gittins1979bandit}
    John~C. Gittins.
    \newblock Bandit processes and dynamic allocation indices.
    \newblock \emph{Journal of the Royal Statistical Society: Series B
        (Methodological)}, 41 (2)  148--164, 1979.
    
    \bibitem[Honda and Takemura(2010)]{honda2010asymptotically}
    Junya Honda and Akimichi Takemura.
    \newblock An asymptotically optimal bandit algorithm for bounded support
    models.
    \newblock volume~85, pages 67--79, 01 2010.
    
    \bibitem[Honda and Takemura(2011)]{honda2011asymptotically}
    Junya Honda and Akimichi Takemura.
    \newblock An asymptotically optimal policy for finite support models in the
    multiarmed bandit problem.
    \newblock \emph{Machine Learning}, 85 (3)  361--391, Dec
    2011.
    
    \bibitem[Jaksch et~al.(2010)Jaksch, Ortner, and Auer]{jaksch2010optimal}
    Thomas Jaksch, Ronald Ortner, and Peter Auer.
    \newblock Near-optimal regret bounds for reinforcement learning.
    \newblock \emph{Journal of Machine Learning Research}, 11
    (Apr)  1563--1600, 2010.
    
    \bibitem[Katehakis and Derman(1986)]{katehakis1986computing}
    Michael~N Katehakis and Cyrus Derman.
    \newblock Computing optimal sequential allocation rules in clinical trials.
    \newblock \emph{Lecture notes-monograph series}, pages 29--39, 1986.
    
    \bibitem[Katehakis and Veinott~Jr(1987)]{katehakis1987multi}
    Michael~N Katehakis and Arthur~F Veinott~Jr.
    \newblock The multi-armed bandit problem: decomposition and computation.
    \newblock \emph{Mathematics of Operations Research}, 12 (2) 
    262--268, 1987.
    
    \bibitem[Katehakis et~al.(1996)Katehakis, Rothblum,
    et~al.]{katehakis1996finite}
    Michael~N Katehakis, Uriel~G Rothblum, et~al.
    \newblock Finite state multi-armed bandit problems: Sensitive-discount,
    average-reward and average-overtaking optimality.
    \newblock \emph{The Annals of Applied Probability}, 6 (3) 
    1024--1034, 1996.
    
    \bibitem[Killian et~al.(2017)Killian, Daulton, Konidaris, and
    Doshi-Velez]{killian2017robust}
    Taylor~W. Killian, Samuel Daulton, George Konidaris, and Finale Doshi-Velez.
    \newblock Robust and efficient transfer learning with hidden parameter {M}arkov
    decision processes.
    \newblock In \emph{Advances in Neural Information Processing Systems}, pages
    6250--6261, 2017.
    
    \bibitem[Lai and Robbins(1985)]{lai1985asymptotically}
    T.~L. Lai and H.~Robbins.
    \newblock Asymptotically efficient adaptive allocation rules.
    \newblock \emph{Advances in Applied Mathematics}, 6  4--22, 1985.
        
    \bibitem[Lakshminarayanan et~al.(2017)Lakshminarayanan, Bhatnagar, and Szepesv{\'a}ri]{lakshminarayanan2017linearly} Chandrashekar Lakshminarayanan, Shalabh Bhatnagar, and Csaba Szepesv{\'a}ri.
    \newblock A linearly relaxed approximate linear program for {M}arkov decision
processes.
    \newblock \emph{IEEE Transactions on Automatic Control}, 63
(4)  1185--1191, 2017.

    
    \bibitem[Lattimore and Szepesv{\'a}ri(2018)]{lattimore2018bandit}
    Tor Lattimore and Csaba Szepesv{\'a}ri.
    \newblock \emph{Bandit Algorithms}.
    \newblock \emph{preprint} 2018.
 
%
\bibitem[Mandl (1974)]{Mandl74}
    P. Mandl. 
  \newblock     Estimation and control in Markov chains. 
     \newblock \emph{Advances in Applied Probability}, 6(1), pages 40-60, 1974.
   
 \bibitem[Mahajan and Teneketzis(2008)]{mahajan2008multi}
    Aditya Mahajan and Demosthenis Teneketzis.
    \newblock Multi-armed bandit problems.
    \newblock In \emph{Foundations and Applications of Sensor Management}, pages
    121--151. Springer, 2008.
    
    \bibitem[McKay(2003)]{mckay2003information}
    David McKay.
    \newblock \emph{Information Theory, Inference and Learning Algorithms}.
    \newblock 2003.
    
    \bibitem[Osband and Van~Roy(2016)]{osband2016lower}
    Ian Osband and Benjamin Van~Roy.
    \newblock On lower bounds for regret in reinforcement learning.
    \newblock \emph{arXiv preprint arXiv:1608.02732}, 2016.
    
    \bibitem[Osband and Van~Roy(2017)]{osband2017why}
    Ian Osband and Benjamin Van~Roy.
    \newblock Why is posterior sampling better than optimism for reinforcement
    learning?
    \newblock In \emph{Proceedings of the 34th International Conference on Machine
        Learning-Volume 70}, pages 2701--2710. JMLR. org, 2017.
    
    \bibitem[Robbins(1952)]{robbins1952some}
    H.~Robbins.
    \newblock Some aspects of the sequential design of experiments.
    \newblock \emph{Bull. Amer. Math. Monthly}, 58, pages 527--536, 1952.
    
    \bibitem[Sonin and Steinberg(2016)]{sonin2016continue}
    Isaac~M Sonin and Constantine Steinberg.
    \newblock Continue, quit, restart probability model.
    \newblock \emph{Annals of Operations Research}, 241 (1-2) 
    295--318, 2016.
    
    \bibitem[Tewari and Bartlett(2008)]{tewari2008optimistic}
    Ambuj Tewari and Peter~L. Bartlett.
    \newblock Optimistic linear programming gives logarithmic regret for
    irreducible mdps.
    \newblock In
    \emph{Advances in Neural Information Processing Systems 20}, volume~25, pages
    1505--1512, 2008.
    
    \bibitem[Thompson(1933)]{thompson1933likelihood}
    William~R Thompson.
    \newblock On the likelihood that one unknown probability exceeds another in
    view of the evidence of two samples.
    \newblock \emph{Biometrika}, 25 (3/4)  285--294, 1933.
    
    \bibitem[Watkins(1989)]{watkins1989learning}
    Christopher John Cornish~Hellaby Watkins.
    \newblock \emph{Learning from Delayed Rewards}.
    \newblock PhD thesis, King's College, Cambridge, UK, May 1989.
    
    \bibitem[Weber et~al.(1992)]{weber1992gittins}
    Richard Weber et~al.
    \newblock On the {G}ittins index for multiarmed bandits.
    \newblock \emph{The Annals of Applied Probability}, 2 (4) 
    1024--1033, 1992.
    
    \bibitem[Whittle(1980)]{whittle1980multi}
    Peter Whittle.
    \newblock Multi-armed bandits and the {G}ittins index.
    \newblock \emph{Journal of the Royal Statistical Society: Series B
        (Methodological)}, 42 (2)  143--149, 1980.
    
\end{thebibliography}

\end{document}